\def\endthebibliography{%
  \def\@noitemerr{\@latex@warning{Empty `thebibliography' environment}}%
  \endlist
}
\newtheorem{theorem}{\textbf{Theorem}}
\newtheorem*{proof}{Proof}
\newtheorem{lemma}{\textbf{Lemma}}
\newtheorem{remark}{Remark}
\newtheorem{assumption}{Assumption}
\newtheorem{definition}{Definition}
\newtheorem{proposition}{\textbf{Proposition}}
\newtheorem{corollary}{\textbf{Corollary}}
\newcommand{\RNum}[1]{\uppercase\expandafter{\romannumeral #1\relax}}
\newcommand{\com}[1]{{\color{black}#1}} 
\newcommand{\com}[1]{}
\def\BibTeX{{\rm B\kern-.05em{\sc i\kern-.025em b}\kern-.08em
    T\kern-.1667em\lower.7ex\hbox{E}\kern-.125emX}}
\begin{document}
\title{{A Federated Online Restless Bandit Framework for Cooperative Resource Allocation}  \\
}
\author{\IEEEauthorblockN{Jingwen Tong, ~\IEEEmembership{Member,~IEEE},
Xinran Li,
Liqun Fu, ~\IEEEmembership{Senior Member,~IEEE},
Jun Zhang,  ~\IEEEmembership{Fellow,~IEEE},
 and
Khaled B. Letaief, ~\IEEEmembership{Fellow,~IEEE}}
\thanks{
Jingwen Tong, Xinran Li, Jun Zhang, and Khaled B. Letaief
are with the Department of Electronic and Computer Engineering,
The Hong Kong University of Science and Technology (HKUST), Kowloon, Hong Kong (e-mails: eejwentong@ust.hk; xinran.li@connect.ust.hk; eejzhang@ust.hk; eekhaled@ust.hk).
Liqun Fu is with the School of Informatics, Xiamen University, Xiamen 361005, China
(e-mail: liqun@xmu.edu.cn). The corresponding author is Jun Zhang (eejzhang@ust.hk)
}
}	

\maketitle

\begin{abstract}
Restless multi-armed bandits (RMABs) have been widely utilized to address resource allocation problems with Markov reward processes (MRPs).
Existing works often assume that the dynamics of MRPs are known prior,
which makes the RMAB problem solvable from an optimization perspective.
Nevertheless, an efficient learning-based solution for RMABs with unknown system dynamics remains an open problem.
In this paper, we study the cooperative resource allocation problem with unknown system dynamics of MRPs.
This problem can be modeled as a multi-agent online RMAB problem, where multiple agents collaboratively learn the system dynamics while maximizing their accumulated rewards.
We devise a federated online RMAB framework to mitigate the communication overhead and data privacy issue by adopting the federated learning paradigm.
Based on this framework, we put forth a Federated Thompson Sampling-enabled Whittle Index (FedTSWI) algorithm to solve this multi-agent online RMAB problem.
The FedTSWI algorithm enjoys a high communication and computation efficiency, and a privacy guarantee.
Moreover, we derive a regret upper bound for the FedTSWI algorithm.
Finally, we demonstrate the effectiveness of the proposed algorithm on the case of online multi-user multi-channel access.
Numerical results show that the proposed algorithm achieves a fast convergence rate of $\mathcal{O}(\sqrt{T\log(T)})$ and better performance compared with baselines.
More importantly, its sample complexity decreases with the number of agents.
\end{abstract}

\begin{IEEEkeywords}
Cooperative resource allocation, restless multi-armed bandit, federated Thompson sampling, Whittle index policy.
\end{IEEEkeywords}

\section{Introduction}
Cooperative resource allocation refers to distributing shared resources or developing allocation policies among multiple agents collaboratively \cite{chen2008unified}.
This process requires decisions on resource allocation that account for the needs and priorities of all involved agents.
Cooperative resource allocation typically arises in situations where resources are limited, and each agent only has partial
information of the global environment due to geographic or service heterogeneity.
Consequently, an optimal allocation policy necessitates aggregating observations from all agents at a central server.
Examples of shared resources in wireless communications include energy \cite{xie2013optimal}, bandwidth\cite{han2008resource}, channel \cite{li2013efficient}, and space \cite{tong2021throughput}. Fig. \ref{RelFig} depicts the challenges, features, and potential solutions of the cooperative resource allocation problems in next-generation wireless communication with the increasing resource shortage, complex environment, and high-density network \cite{letaief2019roadmap}.
This kind of cooperation resource allocation problem often requires making decisions in a dynamic environment, which  aligns with the concept of online learning in sequential decision-making frameworks.

Multi-armed bandit (MAB) is a type of sequential decision-making frameworks for balancing the exploration and exploitation dilemma \cite{bubeck2012regret}.
Among the MABs, Restless MAB (RMAB) attracts great attention in resource allocation problems with dynamic environment (or MRPs) \cite{whittle1988restless, nino2023markovian}.
In a basic RMAB, an agent selects $K$ out of $N$ arms to play at each round and
will receive a reward from the environment.
During this process, the state of each arm will change with the MRP's dynamics, whether it was selected or not,
but only the selected arms yield rewards.
The agent's goal is to maximize the accumulated rewards over time horizon $T$.
To solve this RMAB problem, existing works often assume that the system dynamics are known prior
and solve it from an optimization perspective.
\begin{figure}[!t]
\centering
\includegraphics[width=3.5in]{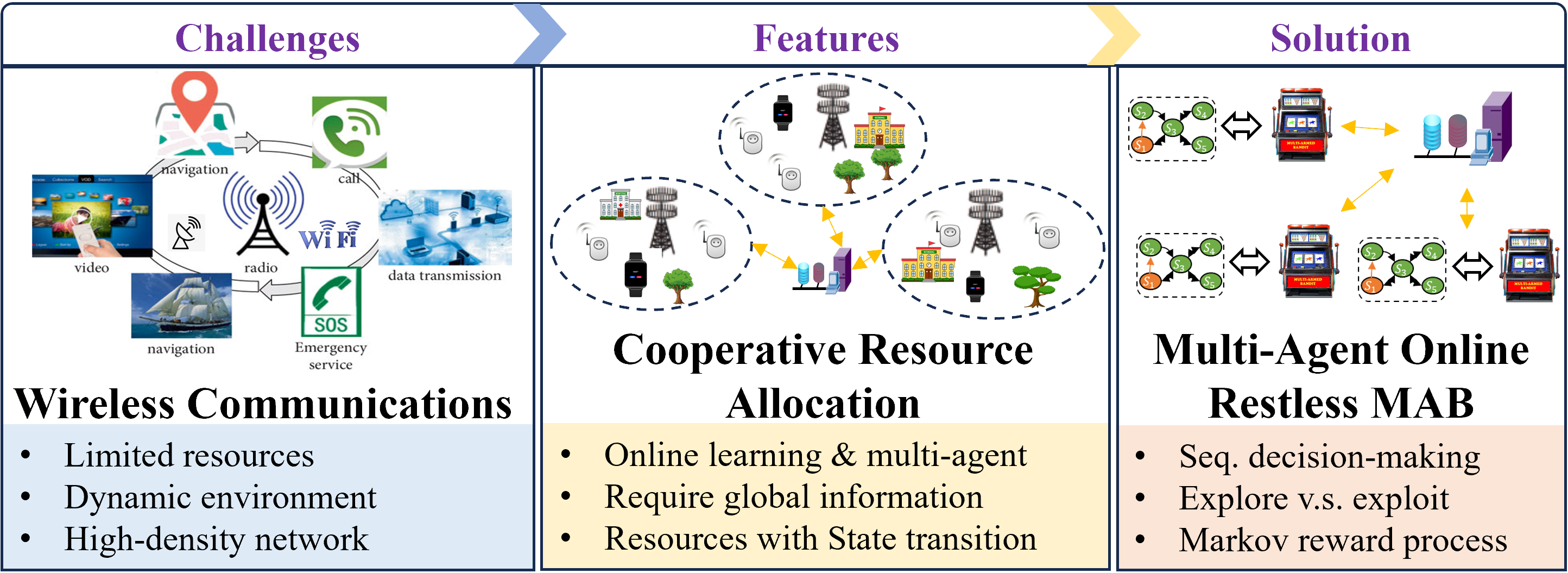}
\caption{Multi-agent RMAB for cooperative resource allocation.}
\label{RelFig}
\end{figure}

However, such prior knowledge is typically unavailable and may be inaccurate in the practical cooperation resource allocation problems \cite{wang2022optimistic}.
This motivates us to study the online RMAB problem,
i.e., to learn the system dynamics online while maximizing the accumulated rewards.
A naive approach is to follow the certainty equivalence principle (e.g., myopic policy) \cite{van1981certainty}, which, however, is suboptimal due to the lack of exploration \cite{ouyang2017learning}.
An effective algorithm needs to not only exploit available
dynamic estimations to maximize its instant rewards but also explore
the unknown dynamics to maximize its long-term rewards.
Many works adopt reinforcement learning (RL) to strike a better exploration-exploitation trade-off.
Unfortunately, their solutions often suffer from high sample complexity and also lack theoretical guarantees \cite{liu2012learning}.
In this paper, we investigate the online RMAB problem from a model-based perspective,
which enjoys both satisfactory performance and high sample efficiency.

In particular, we model the cooperation resource allocation problem as an online partially observable Markov decision process (POMDP)
since the state of the arm is observed only when selected.
Two key challenges exist in solving this online POMDP from the model-based perspective.
First,  the number of system states is uncountable.
As mentioned in \cite{papadimitriou1987complexity}, planning in an MDP is PSPACE-hard due to the state-dependent reward and $N$-dimensional action space.
The continuous belief state in the POMDP makes this issue more severe.
Second, the sample complexity is extremely high.
This is because the collected samples are valid for dynamic estimation only when an arm is selected twice consecutively.
To overcome these challenges,
we further formulate this cooperation resource allocation problem as a multi-agent online RMAB problem.
The multi-agent online RMAB problem can i) reduce the computational complexity by decoupling a $N$-dimension POMDP into $N$ one-dimension belief MDPs using the dynamic allocation index method \cite{gittins2011multi}; ii) increase the sample efficiency in the estimation process by extending the single-agent setting to multi-agent setting.

However, collecting all agents' samples to the central server for model aggregation may result in high communication overhead and data privacy concerns \cite{shao2023survey}.
To overcome this obstacle,
we propose a federated online RMAB framework by combining the RMAB with the federated learning (FL) paradigm \cite{liu2022hierarchical, mcmahan2017communication}.
In this framework, the collected samples are stored at the agent's side, and only system dynamics parameters are sent to the server for model aggregation.
The central server computes the selection policy under the aggregated model and broadcasts it to all agents.
This interaction follows the generalized policy iteration (GPI) principle in the optimal control theory \cite{SuttonRL2018},
i.e., policy evaluation accounts for the dynamic estimation at each agent, and policy improvement is performed at the central server to track the optimal selection policy.
Therefore, the proposed framework is able to pursue the optimal policy for the multi-agent online MAB problem.

Based on the federated online RMAB framework,
we put forth a Federated Thompson Sampling-enabled Whittle Index (FedTSWI) algorithm to solve the multi-agent online RMAB problem.
The proposed algorithm proceeds in episodes where the central server and the agents iteratively track the optimal solution by
following the GPI principle.
Specifically, the central server employs a federated TS algorithm to learn the system dynamics among different agents at each episode.
Then, each agent receives the updated Whittle Index (WI) policy from the central server and executes it for several time slots.
To demonstrate the effectiveness of the FedTSWI algorithm,
we consider the online multi-user multi-channel access problem as a case study,
where multiple secondary users opportunistically access the licensed channels when it is sensed as vacant \cite{letaief2009cooperative}.
In this case, the estimation process of the system dynamics can be modeled as the Beta-Bernoulli distribution.
As a result, each agent only needs to upload the number of times the state-action pair has been selected to the central server.
Therefore, the proposed algorithm has a low implementation complexity.
Moreover, we derive a regret upper bound (i.e., $\mathcal{O}(\sqrt{T\log(T)})$) for the proposed algorithm,
which increases sublinear with the time horizon $T$ when the WI policy is asymptotically optimal.

\subsection{Related Work}
Abundant literature has considered the cooperative resource allocation problems, among which many real-world settings can be further formulated as a multi-agent MAB problem.
Shi  \emph{et al.} \cite{shi2021federated} considered a cognitive radio system where secondary users want to select $K$ out of $N$ channels that are vacant in a coverage area.
The global channel availability at the base station is the average over the entire coverage area due to geographic heterogeneity.
A standard solution is to encourage these secondary users to sample the channels and then aggregate the information at the base station for model aggregation (i.e., the channel availability).
Meanwhile, Yang \emph{et al.} \cite{yang2020federated} considered the recommendation system, where the central server wants to recommend the most popular item to new customers to maximize the expected reward.
The server does not initially have the global item popularity but can learn via customers.
\com{In addition, Xia \emph{et al.} \cite{xia2020multi} and Li \emph{et al.} \cite{li2022privacy} considered the client selection problem in FL by formulating it as a multi-agent stochastic MAB problem. Two upper confidence bound (UCB)-based algorithms have been proposed in these works by considering the large communication overhead, data heterogeneity, and privacy.}
In this paper,  we study cooperative resource allocation under the federated online RMAB framework, which holds great potential for solving various real-world applications.

Solving the RMAB problem with known system dynamics utilizing the optimization methods has been well-studied.
Ortner  \emph{et al.} \cite{ortner2014regret} showed that the optimal policy cannot be the fixed-arm policy.
One can do much better than always pulling the arm with the maximum average reward.
In this way,
many efforts have been devoted to the MDP-based formulations of the RMAB problems.
However, planning in MDPs is PSPACE-hard \cite{papadimitriou1987complexity}.
To bypass such a challenge,
heuristic algorithms \cite{gittins2011multi} decouple the $N$-dimension problem into $N$ one-dimension subproblems with relaxation techniques.
The WI policy \cite{whittle1988restless} is one of the typical algorithms in this line of research.
It is proved that this policy is asymptotically optimal when the number of arms is sufficiently large \cite{weber1990index} and is also optimal in certain stationary cases \cite{liu2010indexability}.
However, the assumptions that system dynamics are pre-known may not hold in real-world applications,
rendering the above algorithms less applicable.
Therefore, we consider the online RMAB problem with POMDP-based formulation.

In the online RMAB, the agent tracks the optimal selection policy by learning the system dynamics online.
There are two approaches to balance the exploitation-exploration dilemma during the learning process, i.e., the frequency-based method and the Bayes-based method.
The UCB algorithm is one of the frequency-based methods,
which has been adopted in \cite{ortner2014regret, ortner2012regret, wang2020restless}.
Ortner \emph{et al.} \cite{ortner2014regret} proposed a UCRL2 algorithm by using the UCB algorithm to estimate the system dynamics
and solve the estimated MDP with the value iteration method.
By considering the POMDP formulation, Ortner \emph{et al.} \cite{ortner2012regret} further proposed a Colored-UCRL2 algorithm
by grouping the historical observations into finite system states and further assuming that these grouped states follow the MDP dynamics.
Moreover, Wang \emph{et al.} \cite{wang2020restless} devised a restless-UCB algorithm by following the explore-then-commit framework.

Compared with UCB-based algorithms, Bayes-based methods (e.g., the TS algorithm) have a lower computational cost and better performance.
Ouyang \emph{et al.}  \cite{ouyang2017learning} proposed a TSDE algorithm to solve this online MDP problem,
which has a low computational complexity and sublinear regret.
Meanwhile, Tewari \emph{et al.} \cite{akbarzadeh2022learning} and Jung \emph{et al.} \cite{jung2019regret}
proposed two TS-based WI algorithms to handle the POMDP with certain structure information.
However, these algorithms suffer from a high sample complexity in solving the online POMDP.
This is because the accrued samples for estimation are valid
only when an arm is selected twice consecutively.
In this paper, we generalize the WI policy under the POMDP assumption to reduce the sample complexity and
customize the TS algorithm to learn the system dynamics under the federated learning paradigm.
Therefore, the proposed algorithm enjoys both satisfactory performance and high sample efficiency.
\com{Finally, we present Table \ref{tab_ref} to compare the related works and the proposed work in terms of the rewarding process, unknown system dynamics, communication and computation efficiency, and privacy guarantee.}

\begin{table*}[!t]
\renewcommand{\arraystretch}{1.1}
\caption{\com{A Comparison of the Related Works and the Proposed Work.}}
\centering\com{
\begin{tabular}{l||c|c|c|c|c}
\hline\
 & \textbf{MRP} & \textbf{unknown dynamic} & \textbf{commun. efficiency} & \textbf{comput. efficiency} & \textbf{privacy guarantee} \\ [0.5ex]
\hline\hline             
Ref. \cite{shi2021federated}  &   &  &\checkmark  &\checkmark  &\checkmark  \\

Refs. \cite{yang2020federated, xia2020multi, li2022privacy}  &   & &\checkmark  &\checkmark  & \\

Refs. \cite{weber1990index, liu2010indexability}  &\checkmark    & & &\checkmark  &  \\

Refs. \cite{ortner2014regret, ortner2012regret, wang2020restless}  &\checkmark    & & &\checkmark &   \\

Refs. \cite{ouyang2017learning, akbarzadeh2022learning,jung2019regret}   &\checkmark    &\checkmark   & & &\\

The Proposed Work  &\checkmark    &\checkmark  &\checkmark  &\checkmark  &\checkmark  \\
 \hline
 \end{tabular} \label{tab_ref}}
 \end{table*}


\subsection{Organization}
The remainder of this paper is organized as follows.
In Section \ref{SysT}, we present the system model and problem formulation.
In Section \ref{FFwork}, we illustrate the federated online RMAB framework.
The FedTSWI algorithm is proposed in Section \ref{SecFedTSWI} to solve the multi-agent online RMAB problem.
Section \ref{SecCS} considers the online multi-user multi-channel access case to demonstrate the effectiveness of the proposed algorithm.
Simulation results are given in Section \ref{SimSec}, and Section \ref{SecCF} concludes this paper.
\com{For convenience, Table \ref{tab_sym} lists some of the main notations and all of the definitions and propositions in this work.}

\begin{table}[!t]
\renewcommand{\arraystretch}{1.2}
\caption{\com{List of Notations, Definitions, and Propositions.}}
\centering\com{
\begin{tabular}{c|l}
\hline
\textbf{Items} & \textbf{Description} \\ [0.5ex]
\hline\hline                 
$M$  &  The number of agents \\
$N$  &  The number of projects or arms \\
$K$  &  The number of selected arms at each time slot \\
$L$ & The number of episodes at Algorithm 1\\
$T$ & The number of time slots at Algorithm 1\\
$S$  &  The joint state of the $N$ project \\
$\mathcal{S}$ & The set of joint states at each agent \\
$A$  &  The joint action of the $N$ project \\
$\mathcal{A}$ & The set of joint actions at each agent \\
$b$ & The belief state\\
$\theta$  &  The state transition probability or system dynamic \\
$\lambda$ & The Lagrange multiplier or playing charge\\
$W_n$ & The Whittle index of arm $n$\\
\hline
Definition \ref{define0}   &  Policy for the belief MDP problem of \eqref{SysGol} \\
Definition \ref{PasSet}   &  Passive set of the single-armed bandit process \eqref{BelValue01} \\
Definition \ref{deIndex}   &  Indexability of the single-armed bandit process \eqref{BelValue01} \\
Definition \ref{define02}   &  WI policy of the single-agent RMAB problem \eqref{SysGol2}  \\
Definition \ref{define1}   &  The definition of \textit{pseudo} regret in \eqref{Reg01} \\
\hline
Proposition  \ref{propo1}   &  The belief state of the single-armed bandit process \\
Proposition  \ref{propo2}   &  The optimal policy for the single-agent RMAB  \\
Proposition  \ref{propo3}   &  The conditions for driven WI in the case study \\
 \hline
 \end{tabular} \label{tab_sym}}
 \end{table}

\section{System Model and Problem Statement}\label{SysT}
In this section, we first introduce the cooperative resource allocation problem.
Then, we formulate it as a POMDP problem.
To overcome the high computational complexity, we further model this problem as an RMAB problem by decoupling the high-dimension POMDP problem to several one-dimensional belief MDP problems.

\subsection{System Model}\label{SM}
We consider a cooperative resource allocation problem, where $M$ agents collaboratively allocate resources to $N$ projects\footnote{The terminologies of project and arm are interchangeable in this work.} sequentially to maximize their accumulated rewards.
\com{For example, in the multi-user multi-channel access problem, the $M$ agents can be the secondary users and the $N$ projects are the channels.}
Let $m \in \mathcal{M} = \{1, 2, \ldots, M\}$ and $ n \in \mathcal {N} = \{ 1,2,\ldots, N\}$ be the indices of the agent and project, respectively.
Time is slotted into $t=1,2, \ldots, T$.
We assume that each project has several states and actions, denoted by $\mathcal{S}^n$ and $\mathcal{A}^n$.
Hence, the joint state and action for the $N$ projects are denoted by $S = \{s^1, s^2, \ldots, s^N \}$ and $A = \{a^1, a^2, \ldots, a^N\}$, respectively.
Note that each project only has two possible actions, i.e., $\mathcal{A}^n = \{0, 1\}$,
where $a=1$ denotes the active (selected) project, and $a=0$ is the passive project.
In addition, let $R^n(s, a)$ be the reward function of project $n$ in state $s$ when taking action $a$.
\com{For the multi-user multi-channel access case, $R^n(s,a)$ represents the transmission rate when the $n$-th channel's state is $s$ and taking action $a$. }
Here we assume that the reward function is deterministic\footnote{This work can be generalized to the unknown reward case at the cost of some constant factor of the regret.} and the outputs are bounded in $[0,1]$.

Define $\theta_n(s'|s, a)$ as the state transition probability of project $n$ from state $s$ to $s'$ when taking action $a$.
Then, the state transition probability (also known as system dynamic) of the RMAB from state $S$ to state $S'$ when taking action $A$ is given by
\begin{equation}\label{SysPro}
\theta\left(S'|S, A \right) = \prod_{n=1}^{N} \theta_n \left(s'|s, a \right).
\end{equation}
Assume that the $M$ agents have the same system dynamics and are deterministic with unknown parameters, which need to be estimated from the historical observations.
Let $\Omega(\theta)$ be the prior distribution of the system dynamics.
The agent's system dynamics can be viewed as a random realization from the prior distribution at each time slot, i.e., ${\hat{\theta}_t} \sim \Omega(\theta)$. In this cooperative resource allocation problem,  each agent selects $K$ out of $N$ projects to invest at time slot $t$, where $1\leq K < N$.
After that, it receives a reward from the environment denoted by
\begin{equation}\label{ReW}
R_t(S,A) = \sum_{n=1}^{N} R_t^n(s,a)  a^n_t,
\end{equation}
where $\sum_{n\in \mathcal{N}} a^n_t = K$.

\subsection{Partially Observable Markov Decision Process}\label{PF01}
The objective of this cooperative resource allocation problem is to learn the system
dynamics collaboratively while maximizing the expected rewards, which is formulated as
\begin{subequations}\label{SysGol}
\begin{align}
& \underset{}{\max\limits_{\pi}}
& &  \  \frac{1}{T} \sum_{t=1}^{T} \sum_{m=1}^{M}  \mathbb{E} \left[R^{m,\pi}_t(S,A) \right] \qquad  \label{SG_1}\\
& \mathrm{s.t.}
& & R^{m,\pi}_t(S,A) = \sum_{n=1}^{N} R_t^{m, n} (s,a)  a^{m, n}_t,  \label{SG_2}\\
& \mathrm{\quad}
& &  \sum_{n=1}^{N} a^{m,n}_t = K, \ \forall m, t,\label{SG_3}
\end{align}
\end{subequations}
where $ \mathbb{E} \left[ \cdot  \right]$ is the expectation operator
and $R_t^{m, \pi} (S,A)$ is the reward of agent $m$ at the state-action pair $(S, A)$ following the policy $\pi$ at time slot $t$.
In addition, $R_t^{m, n} (s,a)$ and  $a^{m, n}_t$ are the reward and action of agent $m$ and project $n$ at time slot $t$, respectively.

When the system dynamic is known, this problem can be viewed as a standard POMDP problem because the state of the arm is observed only when selected.
Define  $\left(\mathcal{S}, \mathcal{A}, R, \hat{\theta}, \mathcal{H}, O \right)$ as a $6$-tuple  POMDP,
where $\mathcal{S} = \prod_{n \in \mathcal{N}} \mathcal{S}^n$ and $\mathcal{A} = \prod_{n \in \mathcal{N}} \mathcal{A}^n$.
In addition, $\mathcal{H}$ is the set of observations and $\hat{\theta} (S'|S,A)$  is the state transition probability at episode $l$ from state $S$ to state $S'$ when taking action $A$.
Term $O(H|S', A)$ is the conditional observation probability
that an agent in state $S'$ observes $H$ when taking action $A$.

Since the agent cannot directly observe the system state $S$,
it can only make decisions based on observations $\mathcal{H}$.
To realize this, the agent maintains a belief state over the system state set $\mathcal{S}$ and updates it using the current observations.
Let $B(S)$ be the belief state, where $B$ is a probability simplex, representing the probability that the system is in state $S$.
When the agent in belief state $B(S)$ takes the action $A$ and observing $H\in \mathcal{H}$ at time slot $t$,
the belief state $B(S')$   is updated as
\begin{equation}\label{BeUp}
B(S') = \frac{ O(H|S',A) \sum_{S\in\mathcal{S}} \hat{\theta}_l(S'|S,A) B(S)} {\mathrm{Pr}(H|B,A)},
\end{equation}
where
\begin{equation}\label{BeStaTra}
\mathrm{Pr}(H|B,A) = { \sum_{S'\in\mathcal{S}} O(H|S',A) \sum_{S\in\mathcal{S}} \hat{\theta}_l(S'|S,A) B(S)}
\end{equation}
is the belief state transition probability that the agent in a belief state $B$ takes action $A$ and observes $H$.
In addition, the reward function is defined as
\begin{equation}\label{BeReFun}
\bar{{R}}(B,A) = \sum_{S\in \mathcal{S}} B(S) {R}(S,A).
\end{equation}
Hence, Eqs. \eqref{BeUp}-\eqref{BeReFun} constitute a $3$-tuple belief MDP.
Next, we define a policy for this belief MDP.
\begin{definition}\label{define0}
\emph{(Policy)} A policy $\pi$ is a conditional distribution over actions given the belief states, i.e.,
\begin{equation}\label{Pol1}
  \pi(A|B) = \mathbb{P}[A_t = A| B_t = B],
\end{equation}
where $A_t$ and $B_t$ are the action and belief state at time slot $t$, respectively.
\end{definition}

The optimal policy $\pi^{\ast}(A|B)$ for the belief MDP can be obtained by solving the corresponding Bellman equation.
Approximate methods, such as point-based value iteration (PBVI), are widely adopted for solving the belief MDP problem.
However, solving this problem is intractable since the number of system states $\mathcal{S}$ is uncountable.
Typically, the number of system states is $|\mathcal{S}| = \prod_{n\in\mathcal{N}} |\mathcal{S}^n|$ when the true state is fully observed,
which is PSPACE hard in identifying the optimal policy \cite{liu2010indexability}.
To bypass this obstacle, we further model this problem as an RMAB problem.

\subsection{The RMAB Problem}\label{PF02}
Note that the $M$ agents are independent when the system dynamics ${\theta}$ are given. Thus, we can drop the agent's index $m$ and problem \eqref{SysGol} is equivalent to maximizing the expected rewards of each agent, i.e.,
\begin{subequations}\label{SysGol2}
\begin{align}
& \underset{}{\max\limits_{\pi}}
& &  \  \frac{1}{T} \sum_{t=1}^{T}  \sum_{n=1}^N   \mathbb{E} \left[  R_t^{n,\pi} (s,a)  a^{n}_t \right] \quad\quad  \label{SG_11}\\
& \mathrm{s.t.}
& & \sum_{n=1}^{N} a^{n}_t = K, \ \forall t. \label{SG_13}
\end{align}
\end{subequations}
Notice that the per-slot constraint \eqref{SG_13}  can be relaxed by
\begin{equation}\label{ConLonTerm}
\frac{1}{T} \sum_{t=1}^{T} \sum_{n=1}^{N} \left(1-a^{n}_t\right) = N-K.
\end{equation}
The Lagrangian function of problem \eqref{SysGol2} is defined as
\begin{equation}\label{LagRex}\small
\begin{split}
\mathcal{L}(\lambda)=& \sup\limits_{\pi} \   \mathbb{E} \left[ \sum_{n=1}^N \left( \underbrace{\frac{1}{T}  \sum_{t=1}^{T}  \left(R_t^{n,\pi} (s,a)  a^{n}_t + \lambda \left(1-a^{n}_t\right) \right)}\limits_{\mathrm{arm} \ n} \right)  \right] \\
&- \lambda (N-K),
\end{split}
\end{equation}
where $\lambda$ is the Lagrange multiplier, also known as the playing charge \cite{liu2010indexability}.
The above equation can be decoupled into $N$ subproblems.
Each subproblem corresponds to a project (or arm).
For a fixed $\lambda$, the decoupled problem of arm $n$ is given by
\begin{equation}\label{SysGol3}
{\max\limits_{\pi}}
 \quad \frac{1}{T}  \sum_{t=1}^{T}    \mathbb{E} \left[ R_t^{n,\pi} (s,a)  a^{n}_t + \lambda \left(1-a^{n}_t\right) \right].
\end{equation}
In the following,
we refer to each subproblem as a single-armed bandit process and drop the arm's index $n$.

Next, we show the single-agent bandit process can also be formulated as a POMDP.
There are two actions at each arm, i.e., $a\in \{0,1\}$.
The agent observes the true state of the arm when it is selected ($a=1$).
However, it needs to speculate on the true state based on its historical observations when the arm is passive ($a=0$).
This process can be captured by the Bayesian rule: The agent maintains a belief state over the arm's states.
The following proposition shows the belief state update process.
\begin{proposition}\label{propo1}
For the single-armed bandit process, which can also be viewed as a POMDP, the belief state at the next time slot is updated by
\begin{equation}\label{BelUpdate}
b(s')=
\left\{\begin{array}{ll}
\eta{\sum\limits_{s\in \mathcal{S}} \hat{\theta}_l(s'|s, a) b(s)}, &    \mathrm{if} \ a = 0,\\
\hat{\theta}_l(s'|s, a),    &    \mathrm{if} \ a = 1, h =s, \\
\end{array}\right.
\end{equation}
where $\eta =  {1}/{\sum_{s'\in \mathcal{S}}  \sum_{s\in \mathcal{S}} \hat{\theta}_l(s'|s, a) b(s)}$
is a normalizing constant, and $h =s$ is the observed true state.
\end{proposition}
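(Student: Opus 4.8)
\emph{Proof idea.} The plan is to derive \eqref{BelUpdate} as a direct specialization of the generic POMDP belief update \eqref{BeUp} to the single-armed bandit process, exploiting the fact that here the observation kernel $O(\cdot\,|\,\cdot,\cdot)$ is \emph{degenerate}. First I would make the observation model explicit: only two actions are available, and the arm's state is revealed precisely when it is played. Formally, when $a=1$ the agent observes the current true state $s$ deterministically, so $O(h\,|\,s,1)=\mathds{1}[h=s]$; when $a=0$ no state information is returned, so $O(h\,|\,s,0)$ does not depend on $s$ (the ``null'' observation, received with probability one). I would record this as a short modeling remark and then invoke \eqref{BeUp} in the single-arm case (dropping the project index $n$).

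Next I would dispatch the two cases. \emph{Passive action} ($a=0$): since $O(H\,|\,S',A)$ is constant in $S'$, it factors out of the numerator of \eqref{BeUp} and of $\mathrm{Pr}(H|B,A)$ in \eqref{BeStaTra} and cancels, leaving $b(s')=\eta\sum_{s\in\mathcal{S}}\hat{\theta}_l(s'|s,0)\,b(s)$ with $\eta^{-1}=\sum_{s'\in\mathcal{S}}\sum_{s\in\mathcal{S}}\hat{\theta}_l(s'|s,0)\,b(s)$, i.e.\ the first branch of \eqref{BelUpdate}; I would also note that $\eta=1$ whenever $\hat{\theta}_l(\cdot|s,0)$ is a genuine probability distribution, but keeping $\eta$ explicit guards against unnormalized estimates. \emph{Active action} ($a=1$): upon playing the arm and observing $h=s$, Bayes' rule collapses the time-$t$ belief to the point mass $\delta_{s}$, independently of the prior $b$; propagating this one step through the estimated dynamics (equivalently, substituting $B(\cdot)=\delta_{s}(\cdot)$ together with the uninformative next-step observation into \eqref{BeUp}) yields $b(s')=\sum_{\tilde{s}\in\mathcal{S}}\hat{\theta}_l(s'|\tilde{s},1)\,\mathds{1}[\tilde{s}=s]=\hat{\theta}_l(s'|s,1)$, which is the second branch. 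Combining the two cases gives \eqref{BelUpdate}.

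The one genuinely delicate point is fixing a consistent timing convention. The generic update \eqref{BeUp} ties the observation $H$ to the \emph{arriving} state $S'$, whereas in the restless bandit the state revealed when $a=1$ is the \emph{departing} state $s$ that produced the current reward $R_t(s,a)$. I would therefore make the half-step bookkeeping precise — observe $s$ at slot $t$, collapse to $\delta_s$, then transition to slot $t+1$ — and check that this is exactly the convention under which \eqref{BelUpdate} is stated, so that $b(s')$ there is the slot-$(t+1)$ belief. Once this is pinned down, the remainder is routine: verifying that the degenerate $O$ is a valid observation kernel and that the normalization in \eqref{BeStaTra} specializes to the stated $\eta$.
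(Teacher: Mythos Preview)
Your proposal is correct and follows essentially the same route as the paper: both specialize the Bayesian belief update \eqref{BeUp} using the degenerate observation kernel (constant in $s'$ when $a=0$, point mass at the true state when $a=1$) to obtain the two branches of \eqref{BelUpdate}. Your treatment is in fact somewhat more careful than the paper's, which simply sets $O(h|s',0)=1/|\mathcal{H}|$ and, for $a=1$, appeals directly to ``the definition of the belief state'' without spelling out the collapse-then-propagate half-step you identify.
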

\begin{proof}
Please see  Appendix  \ref{appendix1}.
$\hfill\blacksquare$
\end{proof}

The reward function $\bar{R}(b, a)$  when taking action $a$ at belief state $b$ is given by
\begin{equation}\label{BeReward}
\bar{R}(b, a) = \sum_{s \in \mathcal{S}} b(s) R(s, a).
\end{equation}
The belief state transition probability $\Psi(b'|b, a)$ is defined as
\begin{equation}\label{BeStaTra02}
\Psi(b'|b, a) =
\left\{\begin{array}{cc}
1, &    \mathrm{if} \ a = 0,\\
\sum\limits_{h \in \mathcal{H}} \mathrm{Pr} \left(b' | b, a, h \right) \mathrm{Pr}\left( h|b, a\right),    &    \mathrm{if} \ a = 1, \\
\end{array}\right.
\end{equation}
where
\begin{equation}\label{BeStaTra03}
 \mathrm{Pr} \left(b' | b, a, h \right) =
\left\{\begin{array}{ll}
1,    &    \mathrm{if}\ \left(b, a, h\right) \rightarrow  b', \\
0,    &    \mathrm{otherwise}, \\
\end{array}\right.
\end{equation}
and $\mathrm{Pr}\left( h|b, a\right) = { \sum_{s'\in\mathcal{S}} O(h|s',a) \sum_{s\in\mathcal{S}} \hat{\theta}(s'|s, a) b(s)}$.
Note that $O(h|s',a)$ is well characterized by the state transition probability $P(s'|s,a)$
since $h\in \mathcal{H}$ and $ H= S$ when $a=1$.
\com{For the case of $a=0$, the agent cannot observe anything from the environment when $a=0$. In this situation, the belief state will converge to the stationary distribution of the true system state following a determined trajectory. This trajectory is well defined in  \textbf{Proposition} \ref{propo1}. Therefore, belief state $b'$ is deterministic with probability $1$ when $a=0$.}
Based on \eqref{BelUpdate}, \eqref{BeReward}, and \eqref{BeStaTra02},
we obtain a $4$-tuple $(\mathcal{B}, \mathcal{A}, \bar{R}, \Psi)$ belief MDP.
In other words, the $N$-dimension POMDP of \eqref{SysGol2} is decoupled into $N$ one-dimensional belief MDPs.
According to \cite{gittins2011multi}, these $N$ one-dimensional belief MDPs constitute a RMAB problem for each agent.

To solve this RMAB problem, it is still necessary to establish the relationship between the solutions of the decoupled belief MDPs and problem \eqref{SysGol2}.
In fact, the Lagrange multiplier $\lambda$ is the only parameter that links to different subproblems after decomposition.
This multiplier $\lambda$ can be viewed as the subsidy of passive according to \cite{liu2010indexability}.
While the infimum $\lambda$ required to move a state from the active decision to the passive decision measures the attractiveness of an arm to be selected.
As a result,  the states of each arm can be divided into active and passive sets under a given $\lambda$.
This reveals the role of the solution for the single-agent RMAB problem by activating $K$ arms with the largest values of the infimum $\lambda$.
More details will be discussed in Section \ref{SecTSWI04}.

\subsection{Multi-Agent Online RMAB Problem}\label{PF03}
\com{Existing works often assume that the system dynamics are known prior and the dynamic allocation index methods can be applied to find the asymptotically optimal solution. However, these optimization-based methods are not practical in real-world scenarios where dynamics are unknown or constantly changing. Moreover, solving RMAB problems with unknown dynamics using optimization techniques can be computationally expensive and may not scale well.} In this paper, we consider that the system dynamics are learnt from historical samples among $M$ agents.
Due to this learning process, one cannot solve the single-agent online POMDP problem independently as the agents' samples are coupled.

However, the $N$ one-dimension belief MDPs are independent at each episode by decoupling the $N$-dimension POMDP of \eqref{SysGol2}  into $N$ one-dimension belief MDPs.
From this perspective, problem \eqref{SysGol} can be regarded as $N$ single-armed bandit subproblems in \eqref{SysGol3} given $\lambda$. Each subproblem can be solved together by $M$ agents.
This raises two problems:
i) How to estimate the system dynamics and fuse the information from $M$ agents?
ii) How to solve the single-armed subproblems based on the estimated system dynamics?

\begin{figure}[!t]
\centering
\includegraphics[width=3.3in]{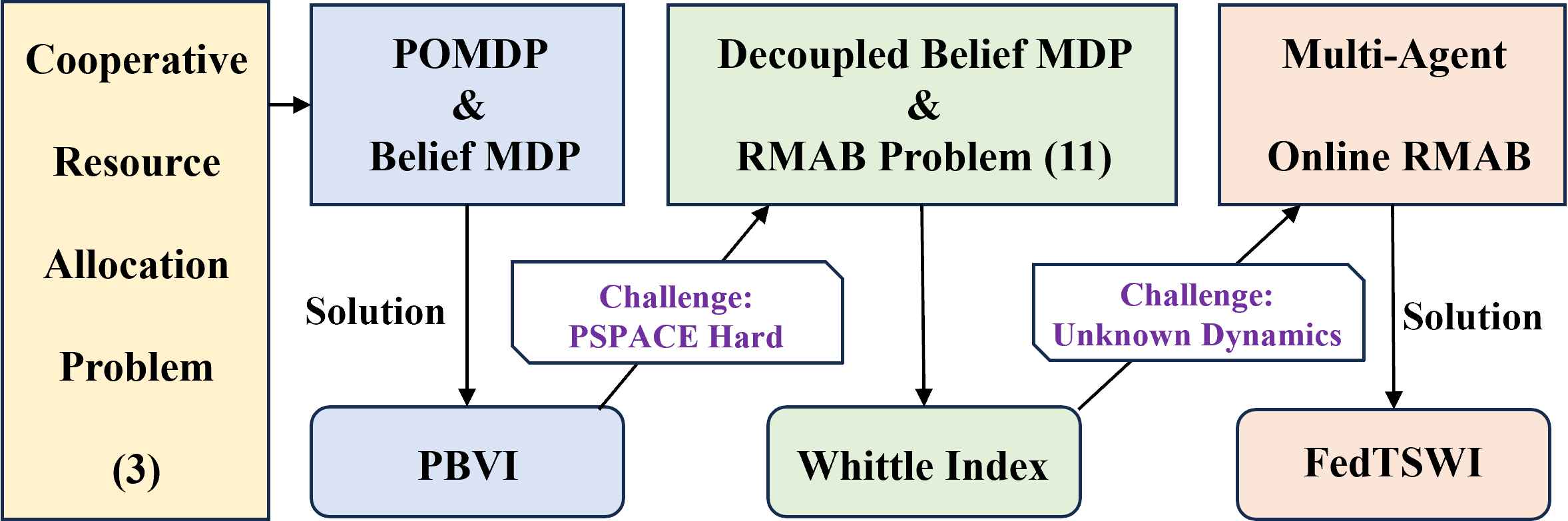}
\caption{\com{The relationship between different problem formulations.}}
\label{PO_Framework}
\end{figure}
In light of this, we regard the RMAB problem as a multi-agent online RMAB problem, where multiple agents collaboratively learn the system dynamics while maximizing their accumulated rewards.
Then, we devise a novel federated online RMAB framework for this problem in Section \ref{FFwork}.
Based on this framework, we propose a FedTSWI algorithm to solve this multi-agent online RMAB problem in Section \ref{SecFedTSWI}.
\com{At last, we present Fig. \ref{PO_Framework} to illustrate the relationship between different problem formulations of this section.}

\section{Federated Online RMAB Framework}\label{FFwork}
\com{We first introduce the federated online RMAB framework for the cooperative resource allocation problem in Section \ref{SecPF01}.
Then, we present the federated TS algorithm to estimate the system dynamics in Section \ref{SecFedTSWI02}.
Based on the estimated dynamics, we apply the WI policy to solve the single-agent subproblem \eqref{SysGol3} in Section \ref{SecTSWI04}.}

\subsection{The Proposed Framework}\label{SecPF01}
Fig. \ref{SysDiag} gives the federated online RMAB framework, which consists of a central server and $M$ agents.
The central server interacts with $M$ agents to learn the system dynamics collaboratively while maximizing their received rewards.
Due to geographic or service heterogeneity, each agent only observes partial information about the environment,
resulting in biased local dynamic estimations.
As a result, the optimal policy can only be established in the central server by collecting all agents' observations.
To address data privacy concerns, we incorporate the FL paradigm into the dynamic estimation process.
Specifically, the samples are kept on the agent side, and only the estimated parameter is uploaded to the central server for model aggregation.
\begin{figure}[!t]
\centering
\includegraphics[width=3.4in]{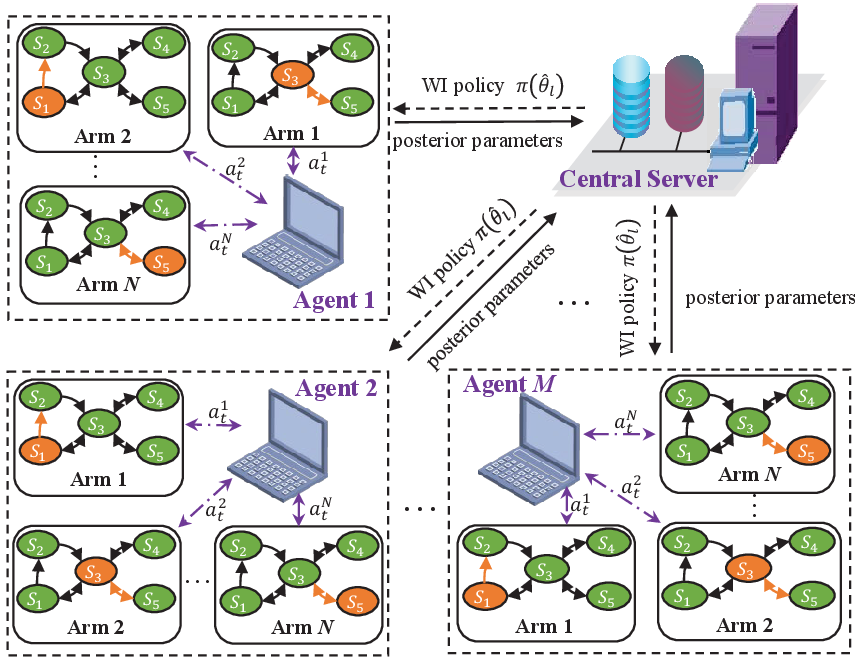}
\caption{An illustration of the federated online RMAB framework.}
\label{SysDiag}
\end{figure}

To reduce the communication overhead,
we divided the learning process into episodes indexed by $l=1,2,\ldots, L$.
The length of each episode is determined by our stopping criteria (see Algorithm \ref{TSWIALG}).
At the beginning of episode $l$, each agent receives the selection policy $\pi$ from the central server.
Then, it executes policy $\pi$ for several time slots and records its observations.
After that, each agent estimates the system dynamics $\hat{\theta}_{m,l}$ using the recorded observations.
At the end of this episode, all agents upload the parameters of the posterior distribution of the system dynamics to the central server for model aggregation.
Based on the aggregated model, the server updates the selection policy $\pi$ and broadcasts it to all agents.
This iteration stops when the time horizon $T$ is exhausted.

\subsection{Federated TS Algorithm for Dynamic Estimation}\label{SecFedTSWI02}
Next, we apply the federated TS algorithm to estimate the system dynamics and aggregate the uploaded information in the central server.
The TS algorithm is a Bayesian method, which maintains a prior distribution on the parameters of the system dynamics $\theta$.
The posterior distribution is updated by multiplying the likelihood function (the observed samples) and the prior distribution.
Let $\Omega_m (\bar{\theta}_{m,l})$ be the prior distribution of the system dynamics of agent $m$ at episode $l$.
The posterior distribution is updated by
\begin{equation}\label{PosUp}
\hat{\Omega}_m (\hat{\theta}_{m, l}) = \frac{O(h| \bar{\theta}_{m, l}) \Omega_m (\bar{\theta}_{m, l})} { {\int_{\theta} O(h| \theta) \Omega_m (\theta) d \theta}},
\end{equation}
where $O(h| \bar{\theta}_{m,l})$ is the likelihood function of the historical observations.
Note that one can approach arbitrary prior and posterior distributions by using the Gibbs sampling or Gaussian process methods \cite{russo2018tutorial}.
\com{To extend this single-agent MAB Bayesian equation \eqref{PosUp} to the multi-agent case, we adopt the method in \cite{lalitha2021bayesian} to aggregate all agents' posterior distribution in the server, i.e.,}
\begin{equation}\label{PosUp02}
\Omega (\hat{\theta}_{l}) =\frac{\exp \left(\sum_{m=1}^{M} \omega_m \log \hat{\Omega}_m (\hat{\theta}_{m, l}) \right)}
{ \int_{\theta} \exp \left(\sum_{m=1}^{M} \omega_m \log \hat{\Omega}_m ({\theta}) \right) d\theta},
\end{equation}
where $\omega_m$ is the sample weight of agent $m$ and $\log (\cdot)$ is the logarithmic function with base $2$.
\com{Weight $\omega$ characterizes the contribution of each agent to the overall posterior distribution.}
We refer to the above process as the federated TS algorithm because each agent only needs to upload the posterior distribution parameters to the central server for model aggregation. This enables the communication efficiency and a privacy guarantee.

The reasons why we adopt the TS algorithm are twofold.
First, it can efficiently balance the exploitation and exploration dilemma,
i.e., exploiting the currently available information to maximize short-term performance or exploring to gather new information that may lead to better long-term performance.
Second, it can reduce the computational complexity and achieve a better performance than the UCB-based algorithms.
The TS algorithm can easily incorporate the problem structures into the prior distribution.
When computing the WI policy, it directly samples a system dynamic from the posterior distribution instead of searching in a confidence region as the UCB-based algorithms.

\subsection{WI Policy for Resource Allocation}\label{SecTSWI04}
\com{The remaining question is how to compute the selection policy $\pi$ based on the estimated $\hat{\theta}$.
In Section 2.3, we show that the Lagrange multiplier $\lambda$ divides the states of each arm into active and passive sets.
The infimum $\lambda$ required to move a state from the active decision to the passive decision measures the attractiveness of an arm to be selected, which is also called the WI.
Therefore, the WI policy for the single-agent RMAB problem is constructed by activating $K$ arms with the largest values of the infimum $\lambda$ or WIs.
However, not every RMAB has a well-defined WI; those that admit a WI policy are called indexable \cite{whittle1988restless}.
In the following, we define the indexability for the single-armed bandit process, i.e., the belief MDP problem given in \eqref{SysGol3}.}

The indexability is established by investigating the structure of the belief MDP.
Let $V_{\lambda}(b)$ be the value function of initial state $b$ with subsidy $\lambda$,
representing the expected total rewards accrued from the single-armed bandit process.
Then, the Bellman optimality equation of the belief MDP is
\begin{equation}\label{BelValue01}\small
V_{\lambda}^{\pi}\left(b\right) = \min\limits_{a\in \{ 0,1\}} \left[ \bar{R}^{\pi}(b, a) +\lambda (1-a) +  \sum\limits_{b' \in \mathcal{B}} \Psi(b'|b,a) V_{\lambda}^{\pi}\left(b'\right) \right]. 
\end{equation}
By substituting  \eqref{BelUpdate},  \eqref{BeReward}, and \eqref{BeStaTra02} into \eqref{BelValue01}, we obtain
\begin{equation}\label{BelValExp}\small
\left\{\begin{array}{ll}
 V_{\lambda}^{\pi}\left(b, a=0\right) = \sum\limits_{s \in \mathcal{S}} b(s) R^{\pi}(s, a) +\lambda +  V_{\lambda}^{\pi}\left(b'\right), \\
V_{\lambda}^{\pi}\left(b, a=1 \right) = \sum\limits_{s \in \mathcal{S}} b(s) R^{\pi}(s, a) +  \sum\limits_{s \in \mathcal{S}} b(s) V_{\lambda}^{\pi}\left(b'\right),  \\
\end{array}\right.
\end{equation}
where $V_{\lambda}^{\pi}\left(b, a=0\right)$ and $V_{\lambda}^{\pi}\left(b, a=1\right)$ are the value functions by taking action $a=0$ and $a=1$ under the policy $\pi$, respectively.
Then, we have the following definitions.
\begin{definition}\label{PasSet}
\emph{(Passive set)} Let $\mathcal{P}(\lambda)$ be the set of belief states $b$ that an arm is optimal to passive with the subsidy $\lambda$.
It satisfies the following conditions:
\begin{equation}\label{P_set}
\begin{split}
   \mathcal{P}(\lambda)
      = \{b: V_{\lambda}^{\pi} \left(b, a = 0\right) \geq V_{\lambda}^{\pi}\left(b, a = 1\right)\}.
\end{split}
\end{equation}
\end{definition}

\begin{definition}\label{deIndex}
\emph{(Indexability)} An arm is indexable if the passive set $\mathcal{P}(\lambda)$ monotonically increases from $\emptyset$ to the whole state space as $\lambda$ increases from $0$ to $+\infty$.
The RMAB problem is indexable if and only if every arm is indexable.
\end{definition}

\com{Based on the indexability, we can derive the WI using \eqref{WI}.}
In particular, the $M$ agents adopt the WI policy to solve the belief MDP problem \eqref{SysGol3} together using the sampled dynamics $\hat{\theta}$.
Let $W_n\left(b, \hat{\theta}_l\right)$ be the WI of arm $n$, which is a function of the belief state $b$ and the system dynamic $\hat{\theta}_l$.
Then, the WI policy can be constructed as
\begin{definition}\label{define02}
\emph{(WI Policy)} If arm $n$ is indexable, its WI is the infimum subsidy $\lambda$ that makes the scheduling decisions (active, passive) equally desirable at state $s$, i.e.,
\begin{equation}\label{WI}
\begin{split}
  W_n\left(b, \hat{\theta}_l\right)
      =  \inf\limits_{\lambda} \left\{\lambda: V_{\lambda}^{\pi} \left(b, a = 0\right) = V_{\lambda}^{\pi}\left(b, a = 1\right)\right\}.
\end{split}
\end{equation}
The WI policy for the single-agent RMAB problem \eqref{SysGol2} is that the agent always selects the $K$ arms with the largest value of WIs at each time slot.
\end{definition}

Therefore, the WI can be obtained by solving \eqref{WI}.
There are many works devoted to deriving the closed-form expression of the WI.
However, the closed-form expression of the WI can only be obtained when the system dynamics have some specific structures.
For example, the linear structure in the AoI-based minimization problem \cite{tong2022age}
and the two-state Markov chain in the multi-channel access problem \cite{liu2010indexability}.
In addition, some works resort to numerical methods to calculate the WI.
The core idea is to find the infimum $\lambda$  in \eqref{WI} by using the binary search algorithm or Newton's method \cite{akbarzadeh2022partially}.
In Section \ref{SecCS}, we provide a closed-form expression of the WI by considering the case study of an online multi-user multi-channel access problem in a cognitive radio network.

\section{The FedTSWI Algorithm}\label{SecFedTSWI}
Based on the federated online RMAB framework, we propose the FedTSWI algorithm to solve the cooperative resource allocation problem \eqref{SysGol}.
Specifically, we present the FedTSWI algorithm in Section \ref{SecFedTSWI04} and derive a regret upper bound for the proposed algorithm in Section \ref{SecFedTSWI05}.

\subsection{Algorithm Description}\label{SecFedTSWI04}
The FedTSWI algorithm is given in Algorithm \ref{TSWIALG}, which consists of two phases: policy improvement and policy evaluation.
We see that the proposed algorithm proceeds in episode $l=1,2,\ldots, L$.
At each episode, the central server performs the policy improvement operations;
Each agent executes the policy evaluation operations for several time slots.
Let $t_l$ be the start time of the $l$-th episode and $T_l = t_{l+1}-t_l$ be the length of the $l$-th episode,
where $t_{l+1}$ is determined by
\begin{equation}\label{stopcriter}
t_{l+1} = \min \left\{t: t > t_l + T_l \ \mathrm{or} \ \mathds{N}_t(s, a) > 2^M \mathds{N}_{t_l}(s, a)  \right\},
\end{equation}
where $\mathds{N}_t(s, a)$ is the number of times that the state-action pair $(s, a)$ has been visited up to time $t$.
The effective length of each episode is dynamically determined by two stopping criteria, i.e.,
i) $ t> t_l + T_l$ and ii) $\mathds{N}_t(s, a) > 2^M \mathds{N}_{t_l}(s, a)$ for some $(s, a)$.
Note that the stopping criteria design is critical for our regret-bound analysis.
\begin{algorithm}[!t]
\caption{The FedTSWI Algorithm}
\label{TSWIALG}
\begin{algorithmic}[1]
\State \textbf{Initialization:} $K, L$, ${M}$, $\mathcal{N}$, $\mathcal{S}$, $\mathcal{A}$, $T$
\State input $b_0$, $t=1$, $t_l=0$, $T_0=1$
\For  {each episode $l=1,2,...,L$, central server}
    \State \emph{$\blacktriangleright$ Policy Improvement (Server)}
    \State Receive posterior parameters from agents
    \State Calculate the posterior distribution using \eqref{PosUp02}
    \State Sample the system dynamics $\hat{\theta}$ from   $\Omega({\hat{\theta}}_{l})$
    \State Compute the WI by solving \eqref{WI} using $\hat{\theta}_l$
    \State Broadcast the WI policy to all agents
    \State \emph{$\blacktriangleright$  Policy Evaluation (Agent)}
    \State Receive the WI policy from the central server
    \While {$t\leq t_l+T_l$ and $\mathds{N}_t(s^n, a^n) \leq 2^M\mathds{N}_{t_l}(s^n, a^n)$ for all $(s^n, a^n)$}
        \State Update the belief state $b$ using \eqref{BelUpdate}
        \State Calculate the WI values of each arm
        \State Select  $K$ arms with the largest WI values
        \State Observe the reward from the selected arms
        \State Record the observations $\mathcal{H}_t$
        \State Update $t = t+1$
    \EndWhile
    \State Update $T_{l} = t-t_l$ and $t_l=t$
    \State Update the posterior distribution $\Omega_m({\hat{\theta}}_{m, l})$ using $\mathcal{H}_t$
    \State Send the parameters of $\Omega_m({\hat{\theta}}_{m,l})$ to the central server
\EndFor
\end{algorithmic}
\end{algorithm}

\emph{Policy Improvement:} The central server first receives the parameters of updated posterior distribution from each agent at the beginning of each episode.
Then, it calculates the merged posterior distribution using \eqref{PosUp02} and samples the system dynamics $\hat{\theta}_l$ from the merged posterior distribution $\Omega({\hat{\theta}}_{l})$.
Based on the sampled dynamics $\hat{\theta}_l$, the central server computes the WI policy and broadcasts it to all agents.
Note that the central server only requires sending the sample dynamics $\hat{\theta}_l$ to the agents when the WI policy has a closed-form expression.

\emph{Policy Evaluation:}
Each agent first receives the WI policy from the server.
Then, they execute the WI policy until triggering the stopping criteria.
During this process, it observes rewards from the selected arms and records them to the historical samples or observations $\mathcal{H}_t$.
This observation cannot only be used to update the belief state $b$ but also be exploited to update the posterior distribution.
Finally, the agent sends the posterior distribution parameters to the central server for model aggregation and policy improvement.

\begin{figure}[!t]
\centering
\includegraphics[width=3.5in]{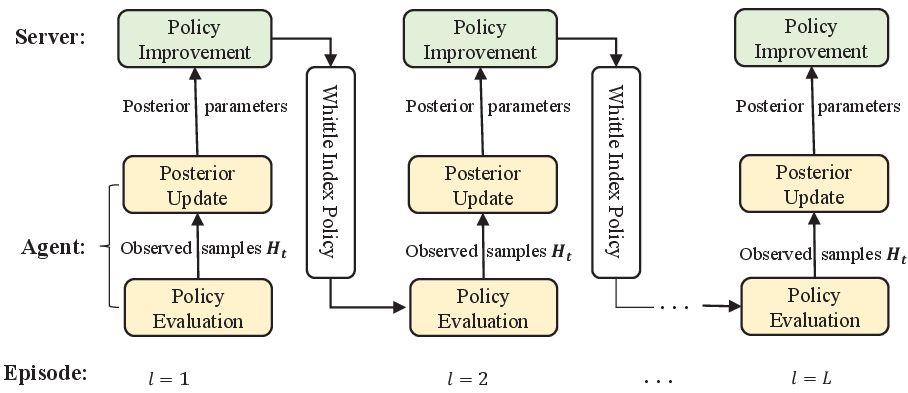}
\caption{The diagram of the FedTSWI algorithm.}
\label{SysDiag02}
\end{figure}

Fig. \ref{SysDiag02} shows the executing process of the FedTSWI algorithm,
which follows the GPI principle, i.e., policy evaluation and improvement.
For the policy evaluation,
each agent keeps tracking the current WI policy and employs the TS algorithm to estimate the system dynamics.
This operation improves the sample efficiency by randomly exploring the system dynamics for heterogeneous agents.
For the policy improvement,
the FedTSWI algorithm ensures the optimal policy is obtained on the server side by carefully balancing the exploitation-exploration dilemma.
This can be achieved by employing the WI policy to solve the POMDP.
The policy evaluation step and the policy improvement step happen iteratively.
The benefits of the proposed algorithm are: i) ensure the communication efficiency and preserve the data privacy;
ii) reduce the sample complexity.

\subsection{A Regret Upper Bound}\label{SecFedTSWI05}
Next, we derive a regret upper bound for the FedTSWI algorithm.
We adopt the definition of the \emph{pseudo} regret to quantify the proposed algorithm, which is given below.
\begin{definition}\label{define1}
\emph{(\emph{Pseudo} Regret)}
The \emph{pseudo} regret is defined  as the expected performance gap between the received rewards accrued from a given policy $\pi$
with unknown dynamics and the rewards obtained following the optimal policy $\pi^{\ast}$ with known dynamics over the time $T=\sum_{l=1}^{L}T_l$. That is
\begin{equation}\label{Reg01}\small
\begin{split}
\mathcal{R}eg(T)
&= \mathbb{E} \left[ \sum_{n=1}^{N} \sum_{l=1}^{L} \sum_{t=1}^{T_l}  \left( R^{n,\pi^{\ast}}_t(b^n,a^n)   - R^{n,\pi}_t(b^n,a^n)  \right)  \right]\\
&= NT \rho (\theta^{\ast}) - \sum_{n=1}^{N} \sum_{l=1}^{L} \sum_{t=1}^{T_l}  \mathbb{E} \left[ R^{n,\pi}_t(b^n, a^n)  \right],
\end{split}
\end{equation}
where $\rho(\theta^{\ast})$ is the optimal average reward obtained by following the optimal policy $\pi^{\ast}$.
\end{definition}
It can be seen that the definition of the pseudo-regret mainly depends on the processes of arm selection and dynamic estimation.
The selection process will induce zero regret when the selection policy is optimal;
Otherwise, it encounters a constant regret, i.e., $\mathcal{O}(\epsilon T)$, where $\epsilon \in [0, 1)$ and $\epsilon=0$ means that the adopted policy is optimal.

However, the estimation process will constitute a main part of the total regrets.
This estimation error can be transformed as the difference between the accumulated rewards of the policy with the sampled system dynamics $\hat{\theta}_l$
and the optimal policy with the actual system dynamics $\theta$.
Our regret analysis has two unique features compared with \cite{ouyang2017learning} and \cite{akbarzadeh2022learning}.
First, we consider the multi-agent setting, which requires carefully designing the stopping criteria to merge all agents' information;
Second, we target the POMDP problem,
where the accrued samples are only valid to learn the system dynamics when a particular arm is selected twice consecutively.
This will result in a constant factor multiplied by the total regret.

Note that \emph{pseudo} regret is the difference between the deterministic and optimal policies when solving problem \eqref{SysGol}.
For specific policies with a constant regret $\mathcal{O}(\epsilon T)$, the regret upper bound of the proposed algorithm is not policy-dependent.
Therefore, we only need to quantify the \emph{pseudo} regret on the learning process by solving a belief MDP based on the sampled dynamics $\hat{\theta}_l$.
In the following, we give some definitions and assumptions to facilitate the analysis.
\begin{definition}\label{defWeakComm}
\emph{(Weakly Communicating MDP)} An MDP is weakly communicating if its states can be partitioned into two subsets.
In the first subset, all states are transient under every stationary policy,
and every two states in the second subset can be reached from each other under some stationary policy.
\end{definition}
\begin{assumption}\label{Assump}
The  belief MDP of problem \eqref{SysGol} is weakly communicating \cite{ouyang2017learning}
and the parameter of the prior distribution of the system dynamics $\Omega (\theta)$ is in a compact set.
Let $SP(\theta)$ be the span of the MDP, i.e., $SP(\theta) = \max_{S\in \mathcal{S}} V(S, \theta)$.
We have $SP(\theta) \leq D$ as the prior distribution of the system dynamics $\Omega (\theta)$ in a compact set.
\end{assumption}

In Algorithm 1, there are $M$ agents collaboratively solving a belief MDP problem at each arm.
Since arms are independent, the total pseudo-regret is obtained by summarizing different arms.
To derive the upper bound to this pseudo-regret, we first need to quantify the number of episodes in Algorithm \ref{TSWIALG}.
Let $L_{T} = \arg \max \{l: t_l\leq T \}$ be the number of episodes before time $T$ for each arm.
Then, we have the following Lemma.
\begin{lemma}\label{lemma1}
The number of episodes at Algorithm \ref{TSWIALG} is \com{upper bounded by}
\begin{equation}\label{Le01}
  L_{T} \leq \sqrt{\frac{4NT|\mathcal{S}^n|}{KM}\log(T+1)},
\end{equation}
where $|\mathcal{S}^n|$ is the number of the states of arm $n$.
\end{lemma}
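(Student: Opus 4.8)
The plan is to bound the number of episodes by tracking the two stopping criteria in \eqref{stopcriter} separately, exactly as in the standard TSDE/posterior-sampling analyses, but adapted to the federated setting where the doubling threshold is $2^M$ rather than $2$. Write $L_T = L_T^{(1)} + L_T^{(2)}$, where $L_T^{(1)}$ counts the episodes terminated because the time budget $t > t_l + T_l$ was exhausted (the ``linear growth'' criterion, giving $T_{l+1} \le T_l + 1$), and $L_T^{(2)}$ counts the episodes terminated because some state-action pair had its visit count more than multiply by $2^M$, i.e. $\mathds{N}_t(s,a) > 2^M \mathds{N}_{t_l}(s,a)$. Each of these is bounded separately and the two bounds are combined.

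First I would handle $L_T^{(1)}$. Since $T_0 = 1$ and each time the first criterion fires we have $T_{l+1} = T_l + 1$, the episode lengths along this branch grow at least linearly in the episode index, so after $L_T^{(1)}$ such triggers the accumulated time is at least on the order of $(L_T^{(1)})^2/2$; since the total time for a single arm's sub-process is at most $KT/N$-ish after accounting for the fact that only $K$ of the $N$ arms are active per slot (here is where the factor $\tfrac{N}{K}$ enters — each arm is selected roughly $\tfrac{K}{N}$ of the time, so per-arm time horizon scales like $\tfrac{KT}{N}$; I'd be careful to get this ratio the right way up, matching $\tfrac{NT}{KM}$ in the claim after the $M$-agent speedup), we get $L_T^{(1)} = \mathcal{O}(\sqrt{NT/(KM)})$. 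Second, for $L_T^{(2)}$: each state-action pair $(s,a)$ can trigger the doubling criterion at most $\log_{2^M}(\text{total visits to }(s,a)) = \tfrac{1}{M}\log_2(\cdot)$ times, because each trigger means the count for that pair has grown by a factor $2^M$. Summing over the $|\mathcal{S}^n|$ states (and the two actions, absorbed into constants) and using that total visits are bounded by the per-arm horizon, a pigeonhole/concavity argument (Jensen on $\log$) gives $L_T^{(2)} = \mathcal{O}\!\big(\tfrac{|\mathcal{S}^n|}{M}\log T\big)$ — this is where the $\tfrac{1}{M}$ and the $\log(T+1)$ in \eqref{Le01} come from.

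Combining, $L_T \le L_T^{(1)} + L_T^{(2)}$, and the standard trick is to observe that a term of the form $\sqrt{c_1 T} + c_2 \log T$ is dominated by (a constant times) $\sqrt{c_1 T \log T}$ for $T \ge 1$, so both pieces can be folded into the single bound $\sqrt{\tfrac{4NT|\mathcal{S}^n|}{KM}\log(T+1)}$; the constant $4$ is just slack chosen to absorb the two contributions and the action-count factor. I expect the main obstacle to be bookkeeping rather than conceptual: getting the per-arm horizon normalization right (the $\tfrac{N}{K}$ versus $\tfrac{K}{N}$ issue, and how the $M$ agents jointly contribute visits so that the effective horizon per arm is $\sim \tfrac{KMT}{N}$ visits, not $\tfrac{KT}{N}$), and making the $2^M$-doubling count argument rigorous when different agents visit $(s,a)$ asynchronously — one must argue that $\mathds{N}_t(s,a)$ aggregates correctly across agents (or is tracked per the joint process) so that the $\log_{2^M}$ counting still applies. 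Once the horizon normalization and the aggregated-count doubling argument are pinned down, merging the $\sqrt{\cdot}$ and $\log$ terms into \eqref{Le01} is routine.
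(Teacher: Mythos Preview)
Your additive decomposition $L_T = L_T^{(1)} + L_T^{(2)}$ has a genuine gap at the bound for $L_T^{(1)}$. You claim that the criterion-1 episode lengths ``grow at least linearly in the episode index, so after $L_T^{(1)}$ such triggers the accumulated time is at least on the order of $(L_T^{(1)})^2/2$.'' This is false: every time criterion~2 fires, the next episode's length cap is reset to $T_{l-1}+1$, where $T_{l-1}$ may be tiny. So criterion-1 episode lengths grow linearly only \emph{within a run between consecutive criterion-2 triggers}, not globally. Concretely, if criterion~2 fires $C$ times at roughly evenly spaced instants, each of the $C$ runs has length about $T/C$ and contains about $\sqrt{2T/C}$ criterion-1 episodes, so $L_T^{(1)} \approx C\sqrt{2T/C} = \sqrt{2CT}$, not $\sqrt{2T}$. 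Your bound would give $L_T = O(\sqrt{T}) + O(\log T) = O(\sqrt{T})$, strictly better than the lemma, which is a red flag; the $\log(T+1)$ in \eqref{Le01} enters \emph{multiplicatively} under the square root, not additively.

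The paper's proof handles exactly this interaction via a \emph{macro-episode} decomposition. It groups episodes into macro episodes, each ending at a criterion-2 trigger; within the $i$-th macro episode the lengths do grow by one each step, so the number of episodes there is at most $\sqrt{2\tilde T_i}$ where $\tilde T_i$ is the macro-episode length. Then Cauchy--Schwarz gives $L_T \le \sum_i \sqrt{2\tilde T_i} \le \sqrt{2CT}$ with $C$ the number of macro episodes. Your $L_T^{(2)}$ argument is essentially the bound on $C$: each $(s,a)$ can trigger criterion~2 at most $\frac{1}{M}\log(T+1)$ times because each trigger multiplies its count by $2^M$, and summing over state--action pairs (with the $K/N$ selection-frequency adjustment) gives $C \le \tfrac{2N|\mathcal S^n|}{KM}\log(T+1)$. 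Substituting this $C$ into $\sqrt{2CT}$ yields the stated bound. So your second half is right in spirit, but it bounds $C$, not an additive piece of $L_T$; the missing idea is that $C$ feeds back multiplicatively into the count of criterion-1 episodes through Cauchy--Schwarz.
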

\begin{proof}
Please see Appendix \ref{appendix3}.
$\hfill\blacksquare$
\end{proof}

Based on Lemma \ref{lemma1}, we can further derive a regret upper bound of the FedTSWI algorithm.
\begin{theorem}\label{Theorem01}
The pseudo regret of the FedTSWI algorithm satisfies the following regret upper bound
\begin{equation}\label{RegBound01}
\begin{split}
\mathcal{R}eg(T) \leq  &(D+1)\sum_{n=1}^{N} \sqrt{\frac{4NT|\mathcal S^n|\log(T+1)}{KM}} \\
&+ 49D \sqrt{\frac{2NT\log(2T)}{KM}}  \sum_{n=1}^{N}|\mathcal S^n| + \mathcal{O}(\epsilon T).
\end{split}
\end{equation}
\end{theorem}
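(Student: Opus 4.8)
The plan is to follow the now-standard Thompson-sampling regret decomposition in the style of \cite{ouyang2017learning}, adapted to the federated multi-agent POMDP setting, and then to bound each piece using Lemma \ref{lemma1}. Since the arms are independent once $\lambda$ (equivalently the WI policy) is fixed, I would first argue that it suffices to bound the per-arm pseudo regret and then sum over $n=1,\ldots,N$. For a fixed arm $n$, I would write the per-arm regret as $\mathbb{E}[\sum_{l=1}^{L_T}\sum_{t=t_l}^{t_{l+1}-1}(\rho(\theta^\ast)-R_t^{n,\pi}(b^n,a^n))]$ and split it into three terms: (i) a term coming from replacing the true optimal gain $\rho(\theta^\ast)$ by the gain $\rho(\hat\theta_l)$ of the sampled model, which has zero expectation within each episode by the key property of posterior sampling (conditioned on the history at $t_l$, $\hat\theta_l$ and $\theta^\ast$ are identically distributed), and whose residual contribution is controlled by the number of episodes; (ii) a term from the Bellman error / telescoping of the value function $V_{\lambda}^{\pi}$ evaluated under $\hat\theta_l$, which telescopes within an episode and leaves boundary terms bounded by the span $D$ (using Assumption \ref{Assump}, $SP(\hat\theta_l)\le D$), again one per episode; and (iii) a term measuring the deviation between the sampled transition kernel $\hat\theta_l$ and the empirical transitions actually realized during episode $l$, which is the genuine estimation-error term.

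Concretely, for terms (i) and (ii) I would bound each by $(D+1)$ times the number of episodes $L_T$ for that arm; summing over the $N$ arms and invoking Lemma \ref{lemma1} with $L_T\le\sqrt{4NT|\mathcal S^n|\log(T+1)/(KM)}$ yields the first summand $(D+1)\sum_{n=1}^N\sqrt{4NT|\mathcal S^n|\log(T+1)/(KM)}$. For term (iii), I would use a concentration argument: on a high-probability event, for each visited state-action pair $(s,a)$ the one-step error between $\hat\theta_l(\cdot|s,a)$ and the true kernel is of order $\sqrt{|\mathcal S^n|\log(T)/\mathds N_{t_l}(s,a)}$; multiplying by the span $D$ (to convert a transition-probability error into a value error) and summing the resulting $\sum_{l}\sum_{t=t_l}^{t_{l+1}-1}1/\sqrt{\mathds N_{t_l}(s_t,a_t)}$ via the stopping rule $\mathds N_t(s,a)\le 2^M\mathds N_{t_l}(s,a)$ (which keeps each episode's counts within a bounded factor of their start-of-episode values) and a standard pigeonhole/$\sum 1/\sqrt n$ bound gives a term of order $D\sqrt{NT\log(2T)/(KM)}\sum_{n=1}^N|\mathcal S^n|$; tracking the numerical constants (the $\sqrt2$ from the stopping ratio and the $49$ absorbing the concentration and telescoping constants, as in \cite{ouyang2017learning}) produces the second summand. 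Finally, the $\mathcal{O}(\epsilon T)$ term is simply the contribution of the (possibly) suboptimal WI policy itself: since the WI policy is only asymptotically optimal, it incurs a per-step gap $\epsilon$ relative to $\pi^\ast$, contributing $\mathcal{O}(\epsilon T)$, and this term is policy-dependent but model-independent as already noted in the discussion preceding Definition \ref{define1}.

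The main obstacle I anticipate is term (iii) in the POMDP rather than MDP setting: the transition estimates $\hat\theta_l$ are only refreshed from samples collected when an arm is \emph{selected}, and a sample is informative for the kernel only when the arm is played on two consecutive slots, so the effective sample count $\mathds N_{t_l}(s,a)$ grows more slowly than the naive visit count. I would handle this by absorbing the resulting slowdown into a constant factor (this is exactly the "constant factor multiplied by the total regret" flagged in the text), carefully defining $\mathds N_t(s,a)$ to count only the valid consecutive-selection samples so that the stopping criterion \eqref{stopcriter} and the $\sum 1/\sqrt{\mathds N}$ summation still go through. A secondary technical point is verifying that the weighted posterior aggregation \eqref{PosUp02} preserves the posterior-sampling identity needed for term (i) — i.e., that the sampled $\hat\theta_l$ drawn from the merged posterior $\Omega(\hat\theta_l)$ is distributed as the true $\theta^\ast$ conditioned on the pooled history — which I would justify from the Bayesian-consensus property of the aggregation rule of \cite{lalitha2021bayesian} together with Assumption \ref{Assump} guaranteeing the prior lies in a compact set so all integrals are well-defined.
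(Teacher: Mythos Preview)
Your proposal is correct and follows essentially the same route as the paper: a three-term regret decomposition in the style of \cite{ouyang2017learning} into a posterior-sampling gain term ($\mathcal{R}eg_1\le\mathbb{E}[L_T]$), a telescoping value-function term ($\mathcal{R}eg_2\le D\,\mathbb{E}[L_T]$), and a transition-kernel estimation term ($\mathcal{R}eg_3$ handled via Lemma~5 of \cite{ouyang2017learning} with the extra $\sqrt{N/K}$ factor), then summing over arms and plugging in Lemma~\ref{lemma1}. The only cosmetic difference is that the paper bounds $\mathcal{R}eg_1$ and $\mathcal{R}eg_2$ separately by $\mathbb{E}[L_T]$ and $D\,\mathbb{E}[L_T]$ rather than bounding ``each by $(D+1)L_T$'' as you write, but the combined $(D+1)\mathbb{E}[L_T]$ is identical; your two flagged obstacles (consecutive-selection sample validity and the aggregation rule preserving the posterior identity) are indeed the delicate points, and the paper treats them at the same level of rigor you propose.
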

\begin{proof}
Please see Appendix \ref{appendix4}.
$\hfill\blacksquare$

\end{proof}

\begin{remark}\label{remark1}
It is seen that the pseudo-regret contains three terms.
The first two terms are the regrets incurred by the dynamic estimation error.
The third term is due to the suboptimal policy, and it will trend to zero when the WI policy is asymptotically optimal.
In this way, the regret upper bound increases sub-linearly with time horizon $T$,
i.e., $\mathcal{R}{eg}(T) = \mathcal{O}(\sqrt{T\log T})$.
This indicates that the proposed algorithm converges to the optimal policy as the per-round regret approaches $0$ when $T$ is sufficiently large.
\end{remark}
\begin{remark}\label{remark2}
The regret upper bound also depends on the system model parameters $|{S^n}|$, $N$, $K$, and $M$.
More importantly, it shrinks sub-linearly when the number of selected arms $K$ and agents $M$ increases.
Therefore, the FedTSWI algorithm has a lower sample complexity in solving the online POMDP compared with the single-agent case.
\end{remark}

\section{Case Study: Online Multi-User Multi-Channel Access}\label{SecCS}
We demonstrate the effectiveness of  the proposed algorithm by studying an online multi-channel access problem in cognitive radio networks,
where multiple users collaboratively learn the channel state transition probabilities while maximizing their average transmit rate.
We present the closed-form expression of the WI in Section \ref{SecCFWI} and apply the federated TS algorithm to learn the system dynamics using the Beta-Bernoulli distribution in Section \ref{SecTSBB}.

\subsection{Dynamic Multi-Channel Access}
In cognitive radio networks, multiple secondary users need to perform spectrum sensing before accessing the licensed spectrum in case of interfering with the primary users \cite{tong2018cooperative}.
We assume that each secondary user can listen to $N$ independent Gilbert-Elliot channels (arms) with transmission rate $\nu_n, \forall n \in \mathcal{N}$.
Without loss of generality, we normalize the maximum data rate to $1$, i.e., $\max_n \nu_n = 1$.
\com{The sensing result reflects the channel status. We consider that each channel only has two states, i.e., $0$ for the ``bad" state and $1$ for the ``good" state.}
The Markov state transition matrix can be characterized by the probability vector $[\theta^n_{01}, \theta^n_{11}]$ as shown in Fig. \ref{MarStaDiag}.
At each time slot, each secondary user selects $K$ out of the $N$ channel to perform spectrum sensing.
If the state of the sensed channel is good, it transmits and receives $\nu_n$ reward;
Otherwise, it receives zero reward.
Hence, the reward observed in time slot $t$ is given by
\begin{equation}\label{CS01}
R_t (s^n_t a^n_t) = \sum_{n=1}^{N} \nu_n s^n_t a^n_t,
\end{equation}
where $s^n_t \in \{ 0 ,1\}$ and $a^n_t \in \{ 0, 1\}$ are the state and action of channel $n$ at time slot $t$, respectively.
\begin{figure}[!t]
\centering
\includegraphics[width=1.6in]{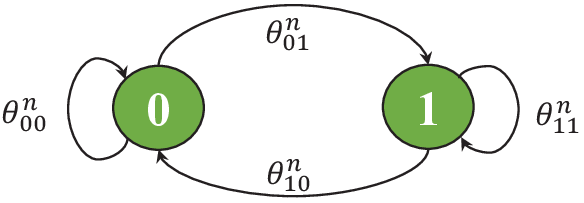}
\caption{The two-state Markov chain of arm $n$.}
\label{MarStaDiag}
\end{figure}

In fact, the channel state can be observed only when sensed.
For the non-selected channels, users are required to deduce their true states from historical decisions and observations.
According to Eq. \eqref{BelUpdate}, given the sensing action $a^n_t$ and the observation,
the belief state in time slot $t+1$ is
\begin{equation}\label{CS02}
b^n(s')=
\left\{\begin{array}{ll}
\theta^n_{11}, &    \mathrm{if} \ a^n_t = 1, s^n_t =1,\\
\theta^n_{01}, &    \mathrm{if} \ a^n_t = 1, s^n_t =0,\\
b^n(s)\theta^n_{11} +(1-b^n(s))\theta^n_{01}, &    \mathrm{if} \ a^n_t =0.
\end{array}\right.
\end{equation}
Let $b^n_0 = \theta^n_{01}/(\theta^n_{01}+1 - \theta^n_{11})$ be the initial state of arm $n$.
The reward function $\bar{R}(b, a)$  when taking action $a$ at belief state $b$ is given by
\begin{equation}\label{BeReward05}
\bar{R}(b, a) = \sum_{s \in \mathcal{S}} b(s) R(s, a).
\end{equation}
The system's goal is to maximize the expected reward by designing a sensing policy $\pi$ to sequentially select channels at each time slot, i.e.,
\begin{align}\label{SysGol05}
& \underset{}{\max\limits_{\pi}}
& &  \  \lim_{T\rightarrow \infty} \frac{1}{T}\sum_{t=1}^{T} \bar{R}_t(b, a) a^{\pi}_t,
\end{align}
where the time horizon $T = \sum_{l=1}^{L} T_l$.

\subsection{Closed-Form Whittle Index}\label{SecCFWI}
As mentioned in Section \ref{PF02}, problem \eqref{SysGol05} can be formulated as a belief MDP.
Its Bellman optimality equation is
\begin{equation}\label{BelValue}\small
V_{\lambda}^{\pi}\left(b\right) = \min\limits_{a\in \{ 0,1\}} \left[ \bar{R}^{\pi}(b, a) +\lambda (1-a) +  \sum\limits_{b' \in \mathcal{B}} \Psi(b'|b,a) V_{\lambda}^{\pi}\left(b'\right) \right]. 
\end{equation}
By substituting  \eqref{BelUpdate},  \eqref{BeReward}, and \eqref{BeStaTra02} into \eqref{BelValue}, we obtain
\begin{equation}\label{BelValExp}\small
\left\{\begin{array}{ll}
 V_{\lambda}^{\pi}\left(b, a=0\right) = \sum\limits_{s \in \mathcal{S}} b(s) R^{\pi}(s, a) +\lambda +  V_{\lambda}^{\pi}\left(b'\right), \\
V_{\lambda}^{\pi}\left(b, a=1 \right) = \sum\limits_{s \in \mathcal{S}} b(s) R^{\pi}(s, a) +  \sum\limits_{s \in \mathcal{S}} b(s) V_{\lambda}^{\pi}\left(b'\right),  \\
\end{array}\right.
\end{equation}
where $V_{\lambda}^{\pi}\left(b, a=0\right)$ and $V_{\lambda}^{\pi}\left(b, a=1\right)$ are the value functions by taking action $a=0$ and $a=1$ under the policy $\pi$, respectively.
Next, we show the single-agent RMAB is indexable by exploiting the structure of the Bellman optimality equation \eqref{BelValExp}.
According to Definition \ref{deIndex}, we have the following proposition and corollary.
\begin{proposition}\label{propo2}
The optimal policy for the single-agent RMAB with subsidy $\lambda$ is a threshold policy.
There exists a belief state $b^{\ast}(\lambda)$ such that $V_{\lambda}^{\ast} \left(b^{\ast}(\lambda), a = 0\right) = V_{\lambda}^{\ast}\left(b^{\ast}(\lambda), a = 1\right)$, which divides the belief space into two regions corresponding to active and passive actions.
\end{proposition}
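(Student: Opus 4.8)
The plan is to establish the threshold structure by analyzing the monotonicity properties of the value functions $V_{\lambda}^{\pi}(b,a=0)$ and $V_{\lambda}^{\pi}(b,a=1)$ with respect to the scalar belief state $b \in [0,1]$ (here $b$ is the probability that the channel is in the good state). First I would argue that the belief update in \eqref{CS02} is monotone and contractive: the passive update $b \mapsto b\,\theta^n_{11} + (1-b)\,\theta^n_{01}$ is an affine map with slope $\theta^n_{11}-\theta^n_{01}$, so if (as is standard in the Gilbert–Elliot model) $\theta^n_{11} \ge \theta^n_{01}$, it is monotonically non-decreasing in $b$ and maps $[0,1]$ into itself with a unique fixed point $b_0^n$. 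This is the key structural fact inherited from \textbf{Proposition} \ref{propo1} / Eq. \eqref{CS02}.

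Next I would show by value iteration that $V_{\lambda}^{\pi}(b)$ is non-decreasing and convex in $b$. Start from $V^{(0)}\equiv 0$ and apply the Bellman operator in \eqref{BelValExp}; since $\bar R(b,a)=b\,\nu_n a$ is affine (hence convex and monotone) in $b$, the passive branch is a composition of a convex non-decreasing function with an affine map (preserving convexity and monotonicity), and the active branch $b\,\nu_n + b\,V_\lambda^\pi(b_0^n\text{-type successor})$ is likewise affine/convex in $b$; the pointwise minimum in \eqref{BelValue} of two convex non-decreasing functions need not be convex in general, but here one can instead track the difference $\Delta_\lambda(b) := V_{\lambda}^{\pi}(b,a=0) - V_{\lambda}^{\pi}(b,a=1)$ directly. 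The idea is that $\Delta_\lambda(b)$ is monotone in $b$: the active value rises faster in $b$ than the passive value because selecting the arm yields the immediate reward $b\,\nu_n$ plus fresh information, while the passive branch only contributes the subsidy $\lambda$ and a damped belief. Monotonicity of $\Delta_\lambda$ in $b$, together with $\Delta_\lambda(0) \ge 0$ (at $b=0$ passivity with subsidy is at least as good) and $\Delta_\lambda(1) \le 0$ for $\lambda$ small enough, yields by the intermediate value property a threshold $b^{\ast}(\lambda)$ with $\Delta_\lambda(b^{\ast}(\lambda))=0$, so passivity is optimal for $b \le b^{\ast}(\lambda)$ and activity for $b > b^{\ast}(\lambda)$ (or the reverse orientation, depending on sign conventions), which is exactly the claimed partition into two regions.

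The main obstacle I anticipate is rigorously proving the monotonicity of $\Delta_\lambda(b)$ in $b$, i.e.\ that the crossing is unique rather than having the two value curves touch and separate repeatedly. The cleanest route is an inductive argument on the value-iteration iterates $V^{(k)}_\lambda$: assume $V^{(k)}_\lambda$ is non-decreasing (and, if needed, that its increments are controlled), show the Bellman update preserves this, pass to the limit using Assumption \ref{Assump} (weakly communicating MDP, bounded span $SP(\theta)\le D$, compact parameter set) to guarantee convergence of value iteration, and finally translate monotonicity of $V_\lambda^\pi$ into monotonicity of $\Delta_\lambda$. A secondary subtlety is handling the boundary/degenerate cases $\theta^n_{11}=\theta^n_{01}$ (belief is constant) and $\lambda$ large (passive set is all of $[0,1]$, threshold at the boundary), which I would dispatch separately as trivial instances of the threshold form. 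Once the threshold exists, indexability in the sense of Definition \ref{deIndex} follows because increasing $\lambda$ shifts $\Delta_\lambda$ upward pointwise, hence moves $b^{\ast}(\lambda)$ monotonically and grows the passive set $\mathcal{P}(\lambda)$ from $\emptyset$ to all of $[0,1]$, which I would state as the companion corollary referenced after the proposition.
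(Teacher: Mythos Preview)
Your approach is sound but takes a genuinely different route from the paper. The paper's proof does not go through monotonicity of the difference $\Delta_\lambda(b)$ or a value-iteration induction; instead it observes directly from the Bellman expansion \eqref{BelValExp} that $V_\lambda^\pi(b,a=1)$ is \emph{affine} in $b$ (it is a convex combination of $V_\lambda^\pi(\theta_{11})$ and $V_\lambda^\pi(\theta_{01})$ plus the linear reward $b\nu_n$), and then invokes the classical Sondik result on convexity of POMDP value functions to conclude that $V_\lambda^\pi(b,a=0)$ is convex in $b$. The threshold $b^\ast(\lambda)$ is then read off geometrically as the intersection of a line with a convex curve, and a three-way case split on $\lambda$ (comparing $\lambda$ to the extremal gaps $\Delta^{a=1}_{\max}-\Delta^{a=0}_{\min}$ and $\Delta^{a=1}_{\min}-\Delta^{a=0}_{\max}$ at the corners of the belief simplex) disposes of the boundary cases you mention.

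What each approach buys: the paper's argument is short because it outsources the hard structural step to the Sondik convexity theorem and the observation that the active branch is linear; it does not need the positive-correlation assumption $\theta_{11}\ge\theta_{01}$ at this stage, and it works for any belief-state POMDP with this two-action structure. Your route is more self-contained and, by tracking $\Delta_\lambda$ directly, makes the single-crossing property the explicit target rather than leaving it implicit in a picture; however, it costs you the inductive monotonicity argument, and your hypothesis $\theta_{11}\ge\theta_{01}$ is not actually needed here. One small correction: your worry that ``the pointwise minimum of two convex functions need not be convex'' is moot, because despite the $\min$ written in \eqref{BelValue01} the operator is a maximization over actions, and the pointwise maximum of convex functions is convex---so convexity of $V_\lambda^\pi$ does propagate through value iteration, which would in fact let you shortcut your monotonicity induction and land on essentially the paper's argument.
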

\begin{proof}
Please see Appendix \ref{appendix2}.
$\hfill\blacksquare$
\end{proof}

\begin{corollary}\label{corol1}
The single-agent RMAB is indexable and its indexability is independent of the system dynamics.
\end{corollary}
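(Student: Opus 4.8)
The plan is to verify Definition \ref{deIndex} directly for the single-armed bandit of \eqref{SysGol05}: show that the passive set $\mathcal{P}(\lambda)$ grows monotonically from $\emptyset$ at $\lambda=0$ to the whole reachable belief space as $\lambda\to\infty$, and observe that no step of the argument constrains $(\theta^n_{01},\theta^n_{11})$. I would begin from Proposition \ref{propo2}, which already supplies the threshold structure: for each fixed $\lambda$ there is an indifference belief $b^{\ast}(\lambda)$ with $V_\lambda^{\ast}(b^{\ast}(\lambda),a=0)=V_\lambda^{\ast}(b^{\ast}(\lambda),a=1)$ that separates the passive region from the active region. Hence $\mathcal{P}(\lambda)$ is (the reachable portion of) a belief half-line determined by $b^{\ast}(\lambda)$, and indexability is reduced to showing that this half-line moves monotonically as $\lambda$ increases and sweeps out the full belief interval.

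The core technical step is to show that the advantage of playing passive, $\Delta_\lambda(b):=V_\lambda^{\ast}(b,a=0)-V_\lambda^{\ast}(b,a=1)$, is nondecreasing in $\lambda$ for every fixed belief $b$. From \eqref{BelValExp}, and using that $R(s,0)=0$ in this problem, the passive branch reads $V_\lambda^{\ast}(b,a=0)=\lambda+V_\lambda^{\ast}(T(b))$ with $T(b)=b\theta^n_{11}+(1-b)\theta^n_{01}$ the one-step passive belief update, whereas the active branch $V_\lambda^{\ast}(b,a=1)=b\nu_n+bV_\lambda^{\ast}(\theta^n_{11})+(1-b)V_\lambda^{\ast}(\theta^n_{01})$ carries no explicit $\lambda$. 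Since a larger subsidy can never reduce the optimal value, $V_\lambda^{\ast}$ is nondecreasing in $\lambda$; iterating the Bellman operator and tracking the coefficient of $\lambda$ then shows the passive branch accrues the subsidy one step sooner than any continuation, so its marginal sensitivity to $\lambda$ dominates that of the active branch, i.e. $\Delta_\lambda$ is nondecreasing in $\lambda$. Combined with the threshold structure of Proposition \ref{propo2}, this makes $\mathcal{P}(\lambda)=\{b:\Delta_\lambda(b)\ge 0\}$ a monotonically nondecreasing family of belief intervals. The endpoints are immediate: at $\lambda=0$ the passive branch carries no subsidy while activating yields a nonnegative instantaneous reward plus the informational benefit of observing the state, so $\Delta_0(b)\le 0$ for all reachable $b$ and $\mathcal{P}(0)$ is empty up to the indifference boundary; and as $\lambda\to\infty$ the subsidy dominates the bounded reward terms, forcing $\Delta_\lambda(b)>0$ everywhere, so $\mathcal{P}(\lambda)$ expands to the whole belief space.

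Finally I would point out that none of this uses the sign of $\theta^n_{11}-\theta^n_{01}$: the threshold structure of Proposition \ref{propo2} holds for positively correlated ($\theta^n_{11}\ge\theta^n_{01}$) and negatively correlated ($\theta^n_{11}<\theta^n_{01}$) channels alike, and the monotonicity of $\Delta_\lambda$ in $\lambda$ is a property of the subsidy term, not of the transition kernel. Therefore the arm is indexable for every $(\theta^n_{01},\theta^n_{11})\in[0,1]^2$, and since the RMAB is indexable iff each arm is (Definition \ref{deIndex}), the corollary follows. I expect the main obstacle to be the negatively correlated regime, where the reachable beliefs form an orbit generated by $T(\cdot)$ applied to $\{\theta^n_{01},\theta^n_{11}\}$ whose ordering relative to the threshold $b^{\ast}(\lambda)$ is delicate; I would therefore treat $\theta^n_{11}\ge\theta^n_{01}$ and $\theta^n_{11}<\theta^n_{01}$ as separate cases when pinning down exactly which half-line $\mathcal{P}(\lambda)$ is, even though the monotonicity-in-$\lambda$ argument is shared by both.
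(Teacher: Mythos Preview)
Your proposal is correct and follows the same high-level route as the paper: invoke the threshold structure of Proposition~\ref{propo2}, then argue that the passive set $\mathcal{P}(\lambda)$ expands monotonically as $\lambda$ grows, which is exactly Definition~\ref{deIndex}. The paper's own proof is considerably terser than yours: it simply notes that $V_\lambda^\pi(b,a=0)$ is convex in $\lambda$, asserts (via Proposition~\ref{propo2}) that the threshold $b^\ast(\lambda)$ is a \emph{linear} function of $\lambda$ and hence increases monotonically from $-\infty$ to $+\infty$, and concludes indexability directly. Your version replaces the linearity assertion with the weaker but sufficient claim that the advantage $\Delta_\lambda(b)=V_\lambda^\ast(b,a{=}0)-V_\lambda^\ast(b,a{=}1)$ is nondecreasing in $\lambda$; this is the more standard and more robust argument, since the paper's linearity claim is not carefully justified there. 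The endpoint checks you give and the case split on $\theta_{11}\gtrless\theta_{01}$ are not present in the paper's one-paragraph proof, but they are harmless elaborations rather than a different strategy.
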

\begin{proof}
Please see Appendix \ref{appendix5}.
$\hfill\blacksquare$
\end{proof}

Based on the above proposition and corollary, we now present the WI policy.
Let $\mathds{T}^j (b(t)) \triangleq \mathrm{Pr} [s(t+j) =1 | b(t)] $ be the $j$-step belief update when the arm is unobserved for $j$ consecutive rounds. \com{Then, we give the following two equations in Proposition \ref{propo3} to derive the WI policy.
\begin{proposition}\label{propo3}
By exploiting the definition of belief state in (27) and the eigendecomposition of the transition matrix $[\theta_{00}, \theta_{01}; \theta_{10}, \theta_{11}]$, we have
\begin{equation}\label{CS00A}
\mathds{T}^j (b) =   \frac{\theta_{01} - (\theta_{11}-\theta_{01})^j (\theta_{01} - (1+\theta_{01} -\theta_{11}b))}{1+\theta_{01}-\theta_{11}},
\end{equation}
and
\begin{equation}\label{CS00B}
\min \{ \theta_{01}, \theta_{11}\} \leq \mathds{T}^j (b) \leq \max \{ \theta_{01}, \theta_{11}\}, \ \forall j\geq 1.
\end{equation}
\end{proposition}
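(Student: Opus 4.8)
The plan is to handle the two identities in \textbf{Proposition} \ref{propo3} separately, both deriving from the single elementary fact, recorded in \eqref{CS02}, that whenever arm $n$ is left unobserved its belief evolves by the affine map $b\mapsto \mathds{T}^1(b)=b\,\theta_{11}+(1-b)\,\theta_{01}=\theta_{01}+(\theta_{11}-\theta_{01})\,b$, so that $\mathds{T}^{j}(b)=\mathds{T}^1\!\big(\mathds{T}^{j-1}(b)\big)$ is simply the $j$-fold composition of this map applied to $\mathds{T}^{0}(b)=b$.

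For \eqref{CS00A}, I would first locate the unique fixed point of $\mathds{T}^1$, namely $b_0=\theta_{01}/(1+\theta_{01}-\theta_{11})$ --- which is exactly the stationary probability of the ``good'' state, the quantity $b^n_0$ introduced just before \eqref{BeReward05}, and is well defined because an irreducible Gilbert--Elliot channel has $\theta_{01}>0$ and $\theta_{11}<1$, hence $1+\theta_{01}-\theta_{11}>0$. Subtracting the fixed-point identity $b_0=\theta_{01}+(\theta_{11}-\theta_{01})b_0$ from $\mathds{T}^1(b)=\theta_{01}+(\theta_{11}-\theta_{01})b$ gives $\mathds{T}^1(b)-b_0=(\theta_{11}-\theta_{01})(b-b_0)$, and a one-line induction on $j$ then yields $\mathds{T}^{j}(b)-b_0=(\theta_{11}-\theta_{01})^{j}(b-b_0)$; putting this over the common denominator $1+\theta_{01}-\theta_{11}$ and collecting terms recovers the right-hand side of \eqref{CS00A}. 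Equivalently --- and this is the ``eigendecomposition'' route named in the statement --- one may diagonalize $P=[\theta_{00},\theta_{01};\theta_{10},\theta_{11}]$, whose eigenvalues are $1$ and $\theta_{11}-\theta_{01}$, and read off that the transient part of the belief row vector $[\,1-b,\;b\,]P^{j}$ decays geometrically at rate $\theta_{11}-\theta_{01}$ around the stationary vector, which is the same statement.

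For \eqref{CS00B}, the key observation is that for any $x\in[0,1]$ the value $\mathds{T}^1(x)=(1-x)\theta_{01}+x\,\theta_{11}$ is a convex combination of $\theta_{01}$ and $\theta_{11}$, so $\min\{\theta_{01},\theta_{11}\}\le\mathds{T}^1(x)\le\max\{\theta_{01},\theta_{11}\}$ and in particular $\mathds{T}^1(x)\in[0,1]$ since $\theta_{01},\theta_{11}\in[0,1]$. Starting from $\mathds{T}^0(b)=b\in[0,1]$ and iterating, an induction on $j$ shows $\mathds{T}^{j}(b)\in[0,1]$ for all $j\ge0$, and hence for every $j\ge1$ the value $\mathds{T}^{j}(b)=\mathds{T}^1\!\big(\mathds{T}^{j-1}(b)\big)$ already lies in $[\min\{\theta_{01},\theta_{11}\},\max\{\theta_{01},\theta_{11}\}]$, which is \eqref{CS00B}. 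This could alternatively be extracted from the closed form \eqref{CS00A} by checking that $b_0$ itself lies between $\theta_{01}$ and $\theta_{11}$ and that $|\theta_{11}-\theta_{01}|<1$, but the convex-combination argument avoids a case split on the sign of $\theta_{11}-\theta_{01}$ and on whether $b$ is above or below $b_0$.

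I do not anticipate a real obstacle: the statement is essentially a computation. The only places needing care are (i) the bookkeeping that rewrites the geometric form $\mathds{T}^{j}(b)=b_0+(\theta_{11}-\theta_{01})^{j}(b-b_0)$ into the precise fractional form of \eqref{CS00A}, keeping the factor $1+\theta_{01}-\theta_{11}$ in the right place, and (ii) the degenerate reducible cases $\theta_{01}=0$ and $\theta_{11}=1$, where that denominator vanishes; these are ruled out by the standing assumption that the Gilbert--Elliot channel is irreducible, and are anyway the trivial cases $\mathds{T}^{j}(b)\equiv b$ for which \eqref{CS00B} still holds.
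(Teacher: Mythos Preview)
Your proposal is correct. For \eqref{CS00A} you and the paper do essentially the same thing: the paper writes $\mathds{T}^{j}(b)=b\,\mathds{T}^{j}(1)+(1-b)\,\mathds{T}^{j}(0)$ and then quotes the eigendecomposition of the transition matrix to supply closed forms for $\mathds{T}^{j}(1)$ and $\mathds{T}^{j}(0)$, whereas you go straight to the fixed point $b_0$ and the geometric iterate $\mathds{T}^{j}(b)=b_0+(\theta_{11}-\theta_{01})^{j}(b-b_0)$; these are the same computation packaged differently.

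For \eqref{CS00B} your route is genuinely different and cleaner. The paper argues via monotone convergence of $\mathds{T}^{j}(b)$ to $b_0$, sandwiching $b_0\le\mathds{T}^{j}(b)\le\mathds{T}^{j}(1)$ and then separately bounding $b_0$ below by $\min\{\theta_{01},\theta_{11}\}$ and $\mathds{T}^{1}(\cdot)$ above by $\max\{\theta_{01},\theta_{11}\}$; this implicitly leans on the positively correlated case $\theta_{11}\ge\theta_{01}$ and would need a mirrored argument otherwise. Your convex-combination observation, that $\mathds{T}^{1}(x)=(1-x)\theta_{01}+x\theta_{11}$ already lands in $[\min\{\theta_{01},\theta_{11}\},\max\{\theta_{01},\theta_{11}\}]$ for any $x\in[0,1]$, handles both sign cases at once and avoids the convergence discussion entirely.
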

\begin{proof}
Please see Appendix \ref{appendix6}.
$\hfill\blacksquare$
\end{proof}}

In addition, we define $\mathds{L} (b, b')$ as the minimum amount of time required for a passive arm to transit across state $b'$ starting from $b$,
i.e., $\mathds{L} (b, b') \triangleq \min \{ j:\mathds{T}^j (b)  > b' \}$.
The Whittle index in the case of $\theta_{11}\geq \theta_{01}$ is given in  Eq. \eqref{WI01}, as shown at the top of this page.
\newcounter{MYtempeqncnt3}
\setcounter{MYtempeqncnt3}{\value{equation}}
\setcounter{equation}{33}
\begin{figure*}[!t]
\normalsize
\begin{equation}\label{WI01}
\renewcommand{\arraystretch}{1.6}
W_n(b)=
\left\{\begin{array}{ll}
b \nu_n ,
&    \mathrm{if} \ b\leq \theta_{01} \ \mathrm{or} \ b \geq \theta_{11}, \\
\frac{\left(b-\mathds{T}(b)\right)\left(\mathds{L}(\theta_{01}, b)+1 \right) + \mathds{T}^{\mathds{L}(\theta_{01},b)} (\theta_{01})  }
{1-\theta_{11} + \left( b -  \mathds{T}(b)\right) \mathds{L}(\theta_{01}, b)  +  \mathds{T}^{\mathds{L}(\theta_{01},b)} (\theta_{01})} \nu_n,
&    \mathrm{if}  \ \theta_{01} < b< b_0, \\
\frac{b}{1-\theta_{11} +b} \nu_n,
&    \mathrm{if}  \ b_0  \leq b< \theta_{11}.\\
\end{array}\right.
\end{equation}
\setcounter{equation}{\value{equation}}
\hrulefill
\end{figure*}
For the case of $\theta_{11}< \theta_{01}$, the Whittle index is given by \cite{liu2010indexability}
\begin{equation}\label{WI02}\small
\renewcommand{\arraystretch}{1.6}
W_n(b)=
\left\{\begin{array}{ll}
b \nu_n ,
&    \mathrm{if} \ b\leq \theta_{11} \ \mathrm{or} \ b \geq \theta_{01}, \\
\frac{b + \theta_{01} - \mathds{T}(b)}{1+ \theta_{01} - \mathds{T}(\theta_{11}) + \mathds{T}(b) -b} \nu_n,
&    \mathrm{if}  \ \theta_{11} < b< b_0, \\
\frac{\theta_{01}} {1+ \theta_{01} - \mathds{T} (\theta_{11})} \nu_n,
&    \mathrm{if}  \ b_0  \leq b<  \mathds{T} (\theta_{11}),\\
\frac{\theta_{01}} {1+ \theta_{01} - b} \nu_n,
&    \mathrm{if}  \ \mathds{T} (\theta_{11}) \leq b< \theta_{01}.
\end{array}\right.
\end{equation}

\subsection{Dynamics Estimation in Beta-Bernoulli Distribution}\label{SecTSBB}
Although we have obtained the closed-form expression of the Whittle index,
the state transition probability\footnote{Note that $\{ \theta_{01}, \theta_{11}\}$ is sampled from the posterior distribution $\Omega (\theta)$.} $\{ \theta_{01}, \theta_{11}\}$ is still unknown and needs to be estimated from the historical samples.
\com{We adopt the FL framework for parameter aggregation, which not only considers the communication efficiency issue but also addresses the data privacy issue (e.g., location privacy and regulatory requirement).}
According to Algorithm \ref{TSWIALG}, the TS algorithm is used to estimate the dynamics by integrating the samples from different agents.
In fact, the prior and posterior distributions in this dynamic channel access case can be viewed as the Beta-Bernoulli distribution.
Specifically, the likelihood function is the Bernoulli distribution since the sensed channel state is either good or bad.
Hence, the prior distribution of $\theta$ can be viewed as a Beta($\alpha, \beta$) distribution.
Its probability density function (pdf) is given by
\begin{equation}\label{BB02}
\begin{split}
\Omega^{\mathrm{Beta}}_{\alpha,\beta}(\theta)&=\frac{\theta^{\alpha-1}(1-\theta)^{\beta-1}}{B(\alpha,\beta)}
\propto \theta^{\alpha-1}(1-\theta)^{\beta-1},
\end{split}
\end{equation}
where $ B(\alpha,\beta)=\frac{\Gamma(\alpha)\Gamma(\beta)}{\Gamma(\alpha+\beta)}$ is the beta coefficient.
In addition, the pdf of the binomial $(k,n)$ distribution (i.e., the total $n$ Bernoulli trails with $k$ positive results) is given by
\begin{equation}\label{BB03}
\begin{split}
O^{\mathrm{Bion}}_{k,n}(\theta)&= \binom{n}{k}\theta^{k}(1-\theta)^{n-k}
\propto \theta^{k}(1-\theta)^{n-k},
\end{split}
\end{equation}
where $\binom{n}{k} = n!/(k!(n-1)!)$ is the binomial coefficient.
According to \eqref{PosUp02}, the posterior distribution of $\theta$ at the central server can be updated by
\begin{equation}\label{BB04}
\begin{split}
\Omega (\hat{\theta}_{l}) &=\frac{\exp \left(\sum_{m=1}^{M} \omega_m \log \hat{\Omega} (\hat{\theta}_{m, l}) \right)}
{ \int_{\theta} \exp \left(\sum_{m=1}^{M} \omega_m \log \hat{\Omega}_m ({\theta}) \right) d\theta}\\
&=\frac{ \Omega(\bar{\theta}_{ l}) \prod_{m=1}^{M}  \left(  O(h| \bar{\theta}_{m, l}) \right)^{\omega_m} }
{ \int_{\theta} \exp \left(\sum_{m=1}^{M} \omega_m \log \hat{\Omega}_m ({\theta}) \right) d\theta}\\
&\propto \Omega(\bar{\theta}_{ l})   \prod_{m=1}^{M}  \left( O(h| \bar{\theta}_{m, l}) \right)^{\omega_m} \\
&= \left(\bar{\theta}_{l}\right)^{\alpha -1 +\sum_{m=1}^{M} \mathds{N}^m_{t_l} (s=1,a) }  \\
& \quad \quad \quad \quad  \quad \times \left(1-\bar{\theta}_{l}\right)^{\beta -1 +\sum_{m=1}^{M} \mathds{N}^m_{t_l} (s=0,a) },
\end{split}
\end{equation}
where $\mathds{N}^m_{t_l} (s,a)$ is the number of times that the state-action pair $(s,a)$ of agent $m$ has been visited and $\mathds{N}^m_{t_l} (s,a) = \mathds{N}^m_{t_l} (s=1,a) + \mathds{N}^m_{t_l} (s=0,a)$.
Therefore, at the end of each episode, the agent only needs to send  $\mathds{N}^m_{t_l} (s, a)$ to the central server for posterior update.

Finally, the online multi-user multi-channel access problem can be solved by running the Algorithm \ref{TSWIALG} using the closed-form WIs
and the Beta-Bernoulli distribution for dynamics estimation.
Each agent is only required to upload $\mathds{N}_{t_l} (s,a)$ to the central server at each episode.
Therefore, the proposed algorithm can achieve a low implementation complexity and communication overhead.
This will further be validated by the numerical results in the following section.

\section{Simulations}\label{SimSec}
This section presents simulations to evaluate the FedTSWI algorithm with different configurations under the online multi-user multi-channel access case.
Each channel follows the two-state Markov model as shown in Fig. \ref{MarStaDiag},
which $0$ stands for ``bad" channel state and $1$ for ``good" channel state.
The state transition probability of each arm is characterized by the $\theta^n_{01}$ and $\theta^n_{11}$.
Specifically, the state transition probabilities of different agents are set to $\{ (\theta^n_{01}, \theta^n_{11})\}_{n\in \mathcal{N}} = \{ (0.20, 0.80), (0.89, 0.17), (0.1, 0.9), (0.9, 0.16)\}$.
\com{In addition, the normalized transmission rates of each arm are $\nu_{n\in \mathcal{N}} = \{ 0.4, 0.9, 0.7, 0.6\}$.
Thus, the rewards of different arms are $\{ 0, 0, 0, 0\}$ when the channels are in the busy state,
and are $\{ 0.4, 0.9, 0.7, 0.6\}$ when the channels are in an idle state.}
For convenience, the initial belief state\footnote{Note that the initial belief state can be an arbitrary value in $[0,1]$.} is set to the stationary distribution of the Markov chain, i.e., $b^n_0 = \theta^n_{01}/(\theta^n_{01} + 1-\theta^n_{11})$.
All results are obtained from $10^4$ Monte Carlo trials.

\begin{figure}[!t]
\centering
\includegraphics[width=2.8in]{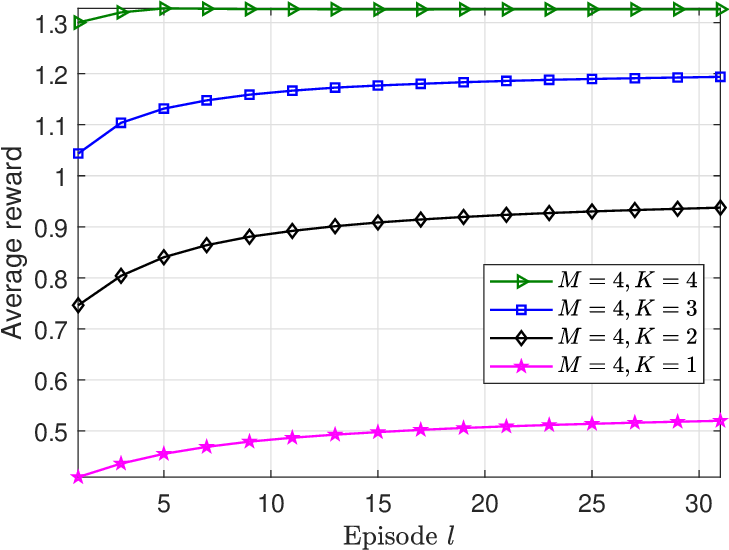}
\caption{The average rewards of the FedTSWI algorithm under different number of selected arms $K=\{1, 2, 3, 4\}.$}
\label{MultiAlgReg2Slot}
\end{figure}
We first consider the case that the number of agents and arms are $M=4$ and $N=4$, respectively.
While the number of selected arms at each time slot changes from  $1$ to $4$.
Fig. \ref{MultiAlgReg2Slot} depicts the average rewards of the FedTSWI algorithm under different numbers of selected arms.
We see that the average rewards increase with the number of episodes and the number of selected arms.
In addition, the convergence rate of the dynamic estimation process also increases with the number of selected arms.
Especially when $K=4$, the proposed algorithm only takes one or two episodes to converge.

\begin{figure}[!t]
\centering
\includegraphics[width=2.8in]{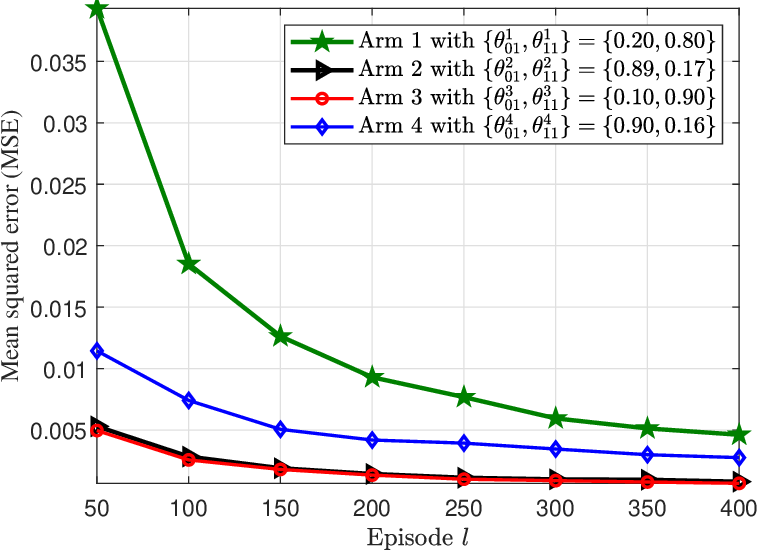}
\caption{\com{The MSEs of of the estimated dynamics $\hat{\theta}$ for different arms by running the FedTSWI algorithm when $M=4$, $N=4$, and $K=2$.}}
\label{DynamicMSE01}
\end{figure}
\com{To further evaluate the convergence rate of the proposed algorithm, we present the mean squared errors (MSEs) of the estimated dynamics $\theta$ for different arms by running the FedTSWI algorithm when $M=4$, $N=4$, and $K=2$, as shown in Fig. \ref{DynamicMSE01}.
The MSE of the estimated dynamics at each arm is defined on the difference between the true dynamics $\{ (\theta^n_{01}, \theta^n_{11})\}_{n\in \mathcal{N}} = \{ (0.20, 0.80), (0.89, 0.17), (0.1, 0.9), (0.9, 0.16)\}$ and the estimated dynamics $\{ (\hat{\theta}^n_{01}, \hat{\theta}^n_{11})\}_{n\in \mathcal{N}}$. In Fig. \ref{DynamicMSE01}, we plot the average MSE of the two states ${ (\theta^n_{01}, \theta^n_{11})}$ for each arm. We observe that the MSEs of the four arms decrease with the number of episodes. In particular, arms 2 and 3 exhibit the lowest MSEs among the four arms, as they have been selected more frequently. This is attributed to the fact that arms 2 and 3 produce the highest rewards among the arm combinations (i.e., $\nu_{n=\{2,3\}} = \{ 0.9, 0.7\}$). Figs. \ref{MultiAlgReg2Slot} and \ref{DynamicMSE01} demonstrate that the convergence rate of the proposed algorithm depends on the number of selected arms $K$ and episodes $L$.
}

\begin{figure}[!t]
\centering
\includegraphics[width=2.8in]{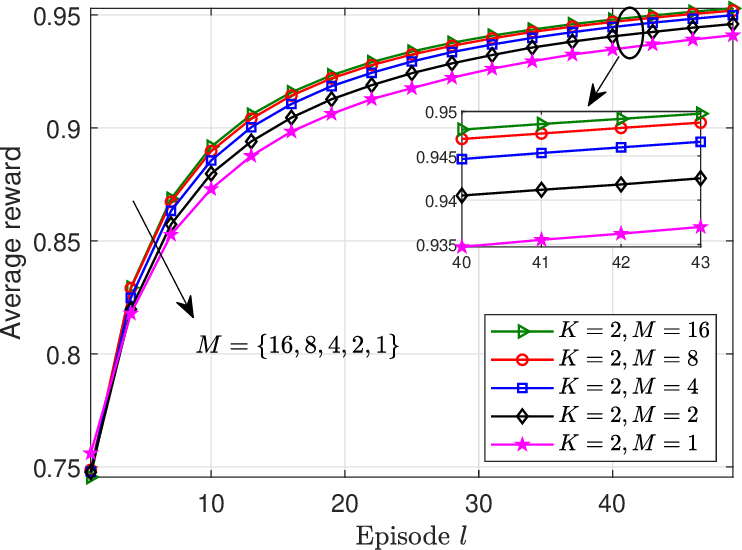}
\caption{The average rewards of the FedTSWI algorithm under a different number of agents $M=\{1, 2, 4, 8, 16\}$.}
\label{MultiAlgRew2Agent}
\end{figure}
Fig. \ref{MultiAlgRew2Agent} shows the average reward of the FedTSWI algorithm under different numbers of agents $M=\{1, 2, 4, 8, 16\}$, where the number of selected arms is $K=2$.
We see that the average rewards of different cases increase with the number of episodes.
The performance of the FedTSWI algorithm improves significantly when the number of agents increases from $M=1$ to $M=4$.
This indicates that the federated online RMAB framework works well,
where the multi-agent setting is more sample efficient compared to the single-agent case (i.e., $M=1$).
However, the performance gain becomes less significant when the number of agents is over $8$.
This is because the two-state Markov model only requires a few samples to estimate the system dynamics.

Next, we compare the FedTSWI algorithm with some baseline algorithms, including the optimal value, the WI policy, the myopic policy, the FedTS-myopic algorithm, the random selection policy, and the fixed selection policy.
The optimal value is obtained by solving the POMDP problem defined in Section \ref{PF02} using the point-based value iteration method with known system dynamics.
At each time slot $t$,
the WI policy is performed by activating $K$ out of $N$ arms with the maximum value of the WIs, where the system dynamics are known prior.
\com{The myopic policy selects $K$ out of $N$ arms with the maximum value of $\bar{R}(b,a)$ in \eqref{BeReward}.
To implement the FedTS-myopic algorithm, we only need to change the line "Select $K$ arms with the largest WI values" in Algorithm 1 to "Select $K$ arms with the largest $\bar{R}(b, a)$".}
The random selection policy randomly chooses $K$ out of $N$ arms from arm set $\mathcal{N}$ at each time slot.
In addition, the fixed selection policy always selects the $K$ arms with the maximum values of the initial belief state $b^n_0$.
Note that the random and fixed selection policies do not require information on the system dynamics.

\begin{figure}[!t]
\centering
\subfloat[Stochastically identical arm]{\label{Identical}  
\includegraphics[width=0.45 \columnwidth]{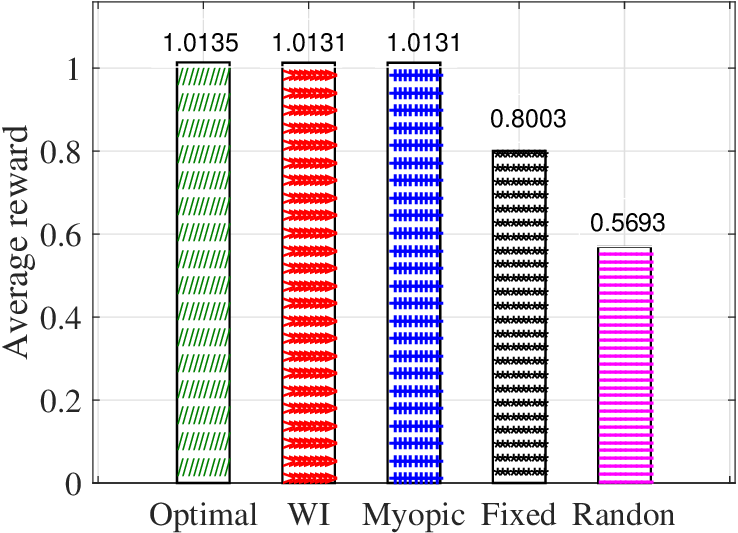}}  
\hfil
 \subfloat[Stochastically non-identical arm] {\label{NonIden}
\includegraphics[width=0.45 \columnwidth]{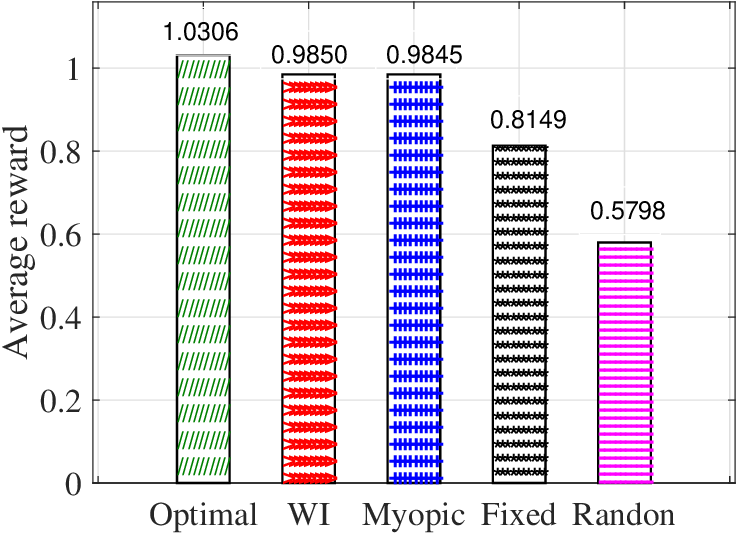}}  
\caption{The average rewards of different baseline algorithms with known dynamics under the stochastically and non-identical arm settings.}  
\label{OfflinePolicy} 
\end{figure}
Fig. \ref{OfflinePolicy} shows the average rewards of the optimal value, the WI policy, the myopic policy, the fixed selection policy, and the random selection policy under the settings of $M=4$ and $K=2$, where the system dynamics are given.
The setting of the identical stochastic arms is that each arm has the same dynamic. i.e., $(\theta^n_{01}, \theta^n_{11}) = (0.20, 0.80), \ \forall n \in \mathcal{N}$.
From Fig. \ref{Identical}, we see that the WI, myopic, and optimal policies have the same performance.
This matches the result in \cite{liu2010indexability}, where the WI and myopic policies are optimal in the stochastically identical arm setting.
However,  the WI and myopic policies are asymptotically optimal under the stochastically non-identical arm setting, as shown in Fig. \ref{NonIden}.
In addition, the fixed selection policy is worse than the WI and myopic policies in both cases.
This is a good example of how one can do much better than always pulling the best one's weight in the RMAB problem.

\begin{figure}[!t]
\centering
\includegraphics[width=2.8in]{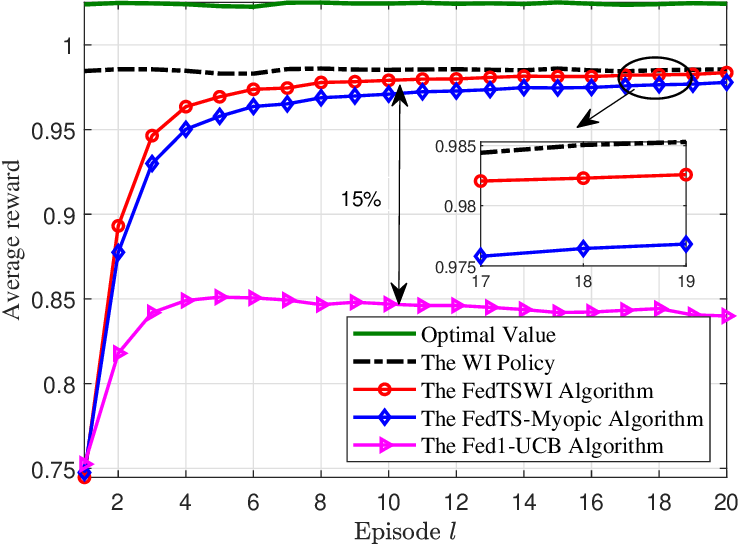}
\caption{The average rewards of the optimal policy,  WI policy,  FedTSWI algorithm,  FedTS-Myopic algorithm, and \com{the Fed1-UCB algorithm.}}
\label{MultiAlgRew}
\end{figure}
\setlength{\textfloatsep}{4pt}
\com{Fig. \ref{MultiAlgRew} compares the performance of the FedTSWI algorithm, optimal value, WI policy, FedTS-Myopic algorithm, and the Fed1-UCB algorithm under the stochastically non-identical arm setting, where $M=4$ and $K=2$.
The Fed1-UCB algorithm was proposed in \cite{shi2021federated} for the multi-agent stochastic MAB problem with the exact model (i.e., the agent and server have the same arm distribution). 
The goal of the server and agent is to find the best arms $2$ and $3$ using the UCB algorithm.}
It is seen that the performance of the FedTSWI algorithm is very close to the WI policy.
This indicates that the FedTSWI algorithm can successfully estimate the system dynamic while following the target policy, i.e., the WI policy.
In addition, the average reward of the FedTSWI algorithm is better than the FedTS-Myopic algorithm because the myopic policy ignores the impact of the current action on the future reward\footnote{\com{Note that both the WI policy and the Myopic policy are established on the belief state $b$ rather than the channel state $s=\{0, 1\}$.}}, focusing solely on maximizing the expected immediate reward.
There is a performance gap between the FedTSWI algorithm and the optimal value.
This aligns with the results of Fig. \ref{NonIden} and Theorem \ref{Theorem01} with $\epsilon = 4.56\times 10^{-5}$.
\com{Moreover, the Fed1-UCB algorithm accounts for the worst performance because it does not consider the Markov reward process at each arm. The proposed algorithm improves the average reward by $15\%$ compared with the Fed1-UCB algorithm.}

\begin{figure}[!t]
\centering
\includegraphics[width=2.8in]{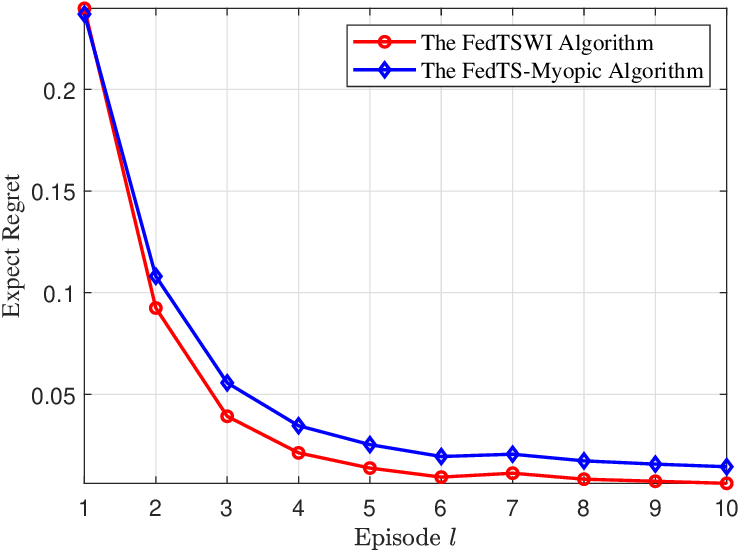}
\caption{\com{The expect regrets of the FedTSWI algorithm and FedTS-Myopic algorithm, where $M=4$ and $K=2$.}}
\label{OnlineRegret}
\end{figure}
\com{Fig. \ref{OnlineRegret} shows the expected regrets of the FedTSWI algorithm and the FedTS-Myopic algorithm, where $M=4$ and $K=2$. The regret is defined as the performance gap between the received rewards accrued from the given algorithms and the rewards obtained by following the Whittle policy with known system dynamics. We observe that the expected regrets of both algorithms decrease with the number of episodes. However, their expected regrets will not converge to zero, as the system's dynamic estimation process and the selection policy introduce some bias.}

\begin{figure}[!t]
\centering
\includegraphics[width=2.8in]{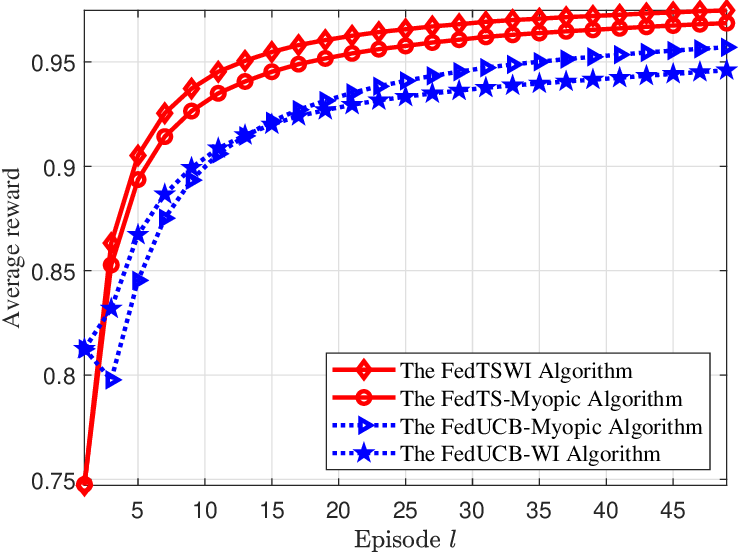}
\caption{The average rewards of the FedTSWI algorithm, FedTS-Myopic algorithm,  FedUCB-WI algorithm, and the FedUCB-Myopic algorithm.}
\label{MultiAlgUCB2TS}
\end{figure}
Finally, we evaluate the dynamic estimation process by considering the federated TS algorithm and the federated UCB algorithm.
Similar to the UCRL2 algorithm in \cite{auer2008near}, we define the UCB index by
$\Pi^{\mathrm{ucb}}_{n} (l) = \hat{\theta}_{l,n} + \sqrt{{\log(|\mathcal{A}| |\mathcal{S}|l)}/{\Upsilon_{n,l}}}$,
where $\Upsilon_{n,l}$ is the number of times that arm $n$ has been selected up to episode $l$.
Then, the  UCB index is used to constitute the FedUCB-WI algorithm and the FedUCB-Myopic algorithm with the WI and myopic policies, respectively.
Fig. \ref{MultiAlgUCB2TS} presents the average rewards of the FedTSWI algorithm, FedTS-Myopic algorithm,  FedUCB-WI algorithm, and the FedUCB-Myopic algorithm under the stochastically non-identical arm setting, where $M=4$ and $K=2$.
We see that the TS-based algorithms outperform the UCB-based algorithms.
Moreover, the UCB-based algorithms suffer from a slow convergence rate.
This is because the UCB-based algorithms maintain an upper bound on the estimated dynamic $\hat{\theta}$.
Therefore, it may violate the condition $\hat{\theta}\in [0,1]$ and return a suboptimal policy.

\section{Conclusions and Future Works}\label{SecCF}
This paper proposed a novel federated online RMAB framework that combines the RMAB and FL paradigms to tackle the cooperative resource allocation problem.
Based on the proposed framework, we proposed the FedTSWI algorithm to solve the RMAB problem by employing the federated TS
algorithm to learn the system dynamics and the WI
policy to maximize the agents’ accumulated rewards.
The proposed algorithm proceeded in episodes, during which the central server and the agent iteratively tracked the optimal solution, following the GPI principle.
In addition, we have derived a regret upper bound for the FedTSWI algorithm, which exhibits a fast convergence rate.
Moreover, we evaluated the proposed algorithm on the application of online multi-user multi-channel access.
Numerical results demonstrated the superior performance of the FedTSWI algorithm compared with the baselines.

\com{In the case study, we have considered that each channel only has two states. However, it is promising to extend the proposed algorithm to more general channel states. This would arise in developing more sophisticated and adaptable algorithms, e.g., deriving the closed-form WI in complex wireless environments. In addition, we assume that the considered problem \eqref{SysGol} is weakly communicating in  Assumption 1. However, it may not always hold in real-world applications. It is critical to relax the assumption and conduct experiments with more practical considerations.}

\appendices
\section{Proof of Proposition 1}\label{appendix1}
Mathematically, the belief state is a representation of an agent's knowledge about the true system state,
which can be updated by the Bayesian rule, i.e.,
\begin{equation}\label{AdxA01}
b' (s'|h)  = \frac{O(h|s', a)\sum_{s\in \mathcal{S}} \hat{\theta}_l(s'|s,a)b(s)} {\sum_{s'\in \mathcal{S}}  O(h|s', a)\sum_{s\in \mathcal{S}} \hat{\theta}_l(s'|s,a)b(s)},
\end{equation}
where $O(h|s', a)$ is the conditional observation probability.

When an action is passive ($a=0$), the agent cannot observe any information about this arm from the environment.
This means that one can only guess a sample $h$ from the finite sample space $\mathcal{H}$ with an equal probability.
Thus, we have $O(h|s', a) = 1/|\mathcal{H}|$, which is a constant.
By substituting it into \eqref{AdxA01}, we obtain
\begin{equation}\label{AdxA02}
b' (s')  = \frac{\sum_{s\in \mathcal{S}} \hat{\theta}_l(s'|s,a)b(s)} {\sum_{s'\in \mathcal{S}}  \sum_{s\in \mathcal{S}} \hat{\theta}_l(s'|s,a)b(s)}.
\end{equation}
We see that the numerator is a function of the belief state and the one-step state transition probability.
More importantly, the belief state $b_{t+1} (s')$ will converge to the Markov stationary distribution if an arm is unobserved for a long period.

When action is active ($a=1$), the agent can observe the true state of this arm, i.e., $h=s$.
According to the  definition of the belief state, we have
\begin{equation}\label{AdxA03}
b' (s')  = \mathrm{Pr} (s'|h=s) = \hat{\theta}_l(s'|s, a=1).
\end{equation}
This proof is concluded by combining \eqref{AdxA02} and \eqref{AdxA03}.
$\hfill\blacksquare$

\section{Proof of Lemma 1}\label{appendix3}
In Algorithm 1, we have two stopping criteria.
Most of the stopping is triggered by the first criterion.
For the second stopping criterion,
we define a macro episode with the start time $ t_{c_i}$ as
\begin{equation}\label{AdxC01}\small
  t_{c_{i+1}} = \min \{ t_l > t_{c_i} : \mathds{N}_{t_l}(s,a) > 2^M \mathds{N}_{t_{l-1}}(s,a), \mathrm{for}\ \mathrm{some}\ (s,a)   \}.
\end{equation}
Let $C$ be the number of macro episodes and $\tilde{T}_i = \sum_{l=c_i}^{c_{i+1}-1} T_l$ be the length of the $i$-th macro episode.
Then, within the $i$-th macro episode, we have $T_l = T_{l-1} + 1$ for all $l = c_i, c_i+1,\ldots,c_{i+1}-2$.
Thus, it yields
\begin{equation}\label{AdxC02}
\begin{split}
  \tilde{T}_i &= \sum_{l=c_i}^{c_{i+1}-1} T_l \geq \sum_{j=1}^{c_{i+1}-c_i-1} \left(T_{c_i} + j \right) + T_{c_{i+1}-1}\\
  &\geq \sum_{j=1}^{c_{i+1}-c_i-1}  (j+1) +1 = \frac{1}{2}(c_{i+1} -c_i)(c_{i+1} -c_i+1),
\end{split}
\end{equation}
which indicates that $c_{i+1}-c_i\leq \sqrt{2\tilde{T}_i}$ for all $i = 1,2,\dots,M$.
Based on this, we have
\begin{equation}\label{AdxC03}
  L_T = c_{C+1} -1 = \sum_{i=1}^{C} (c_{i+1} - c_i) \leq \sum_{i=1}^{C} \sqrt{2\tilde{T}_i}.
\end{equation}
Since $\sum_{i=1}^{C} \tilde{T}_i =T$, we obtain
\begin{equation}\label{AdxC04}
  L_T = c_{C+1} -1 \leq \sum_{i=1}^{E} \sqrt{2\tilde{T}_i} \leq \sqrt{C\sum_{i=1}^{C} 2\tilde{T}_i} = \sqrt{2CT},
\end{equation}
where the second inequality holds because of the Cauchy-Schwartz inequality.

The remaining question is how to quantify the number of macro episodes.
In Algorithm 1, the number of times a state pair is selected is doubled whenever the second stopping criteria is triggered.
Define the start times of macro episodes as $t_1 \cup ( \cup_{(s,a)\in \mathcal S\times \mathcal A} \{ t_l: l\in \Lambda_{(s,a)}\} )$, where $\Lambda_{(s,a)}=\{ l\leq L_T:  \mathds{N}_{t_l}(s,a) > 2^M \mathds{N}_{t_{l-1}}(s,a)\}$.
Since the number of visits to a state-action pair is doubled at every $t_l$ such that $l\in \Lambda_{(s,a)}$, we have $(2^M)^{|\Lambda_{(s,a)}|} -1 \leq T$ by using the arithmetic sequence sum formula.
Therefore, the size of $\Lambda_{(s,a)}$ can be bounded as
\begin{equation}\label{AdxC05}
  |\Lambda_{(s,a)}| \leq \frac{1}{M} \log (T+1).
\end{equation}
Based on this property, we obtain an upper bound on the number of macro episodes
\begin{equation}\label{AdxC06}
\begin{split}
  E &\leq 1 + \sum_{(s,a)} |\Lambda_{(s,a)}| \leq 1 + \frac{N|\mathcal S||\mathcal A|}{KM} \log \left(\frac{KM(T+1)}{N|\mathcal S||\mathcal A|} \right)\\
  & \leq \frac{N|\mathcal S||\mathcal A|}{KM} \log \left(T+1 \right) = \frac{2N|\mathcal S|}{KM} \log \left(T+1 \right),
\end{split}
\end{equation}
where the first inequality is the union bound and the third inequality is obtained using the fact that $|\mathcal A|=2$.
In addition,  the second inequality holds because an arm can be uniformly observed with probability $K/N$  and the log function is concave.

Finally, by substituting \eqref{AdxC06} into \eqref{AdxC04}, we can bound the number of episodes by
\begin{equation}\label{AdxC07}
  L_T \leq \sqrt{2CT} \leq \sqrt{ \frac{4N|\mathcal S|T}{KM} \log \left(T+1 \right) }.
\end{equation}
This concludes the proof.
$\hfill\blacksquare$

\section{Proof of Theorem 1}\label{appendix4}
For each arm at episode $l$, we can write the optimal average per-slot reward of the belief MDP at state $(s_t,a_t)$, which satisfies the following Bellman equation, i.e.,
\begin{equation}\label{BellEquSys}
R^{\pi}(s_t,a_t) = \rho(\hat{\theta}_l) + V(s_t, \hat{\theta}_l) - \sum_{s' \in {S}} \hat{\theta}_l(s'|s_t, a_t)V(s', \hat{\theta}_l),
\end{equation}
where $V(s_t, \hat{\theta}_l)$ is the value function when the initial system state is $s_t$ and the system dynamics is $\hat{\theta}_l$.
Then, the expected regret of Algorithm 1 for each arm can be obtained by substituting \eqref{BellEquSys} into the definition of the regret, which yields
\begin{equation}\label{RegDecomp02}\small
\begin{split}
&\mathcal{R}eg(n, T) = \underbrace{T \mathbb{E} \left[\rho(\theta^{\ast}) \right] - \mathbb{E} \left[ \sum_{l=1}^{L}  T_l \rho({\hat{\theta}}_l) \right]}\limits_{\mathcal{R}eg_1}\\
&+ \underbrace{\mathbb{E} \left[\sum_{l=1}^{L} \sum_{t=t_l}^{t_{l+1}-1} \left[V(s_{t+1}, \hat{\theta}_l) -  V(s_t, \hat{\theta}_l)  \right]  \right]}\limits_{\mathcal{R}eg_2}\\
&+ \underbrace{\mathbb{E} \left[ \sum_{l=1}^{L} \sum_{t=t_l}^{t_{l+1}-1}  \left[\sum_{s' \in {S}} \hat{\theta}_l(s'|s,\pi(s))V(s', \hat{\theta}_l)- V(s_{t+1}, \hat{\theta}_l) \right]  \right]}\limits_{\mathcal{R}eg_3}.
\end{split}
\end{equation}

In the following, we derive the expresses of  $\mathcal{R}eg_1$, $\mathcal{R}eg_2$, and $\mathcal{R}eg_3$.
For $\mathcal{R}eg_1$, according to monotone convergence theorem, we have
\begin{equation}\label{AdxD01}
\begin{split}
\mathcal{R}eg_1 &=T\mathbb{E}[\rho (\theta^{\ast})] -  \mathbb{E} \left[ \sum_{l=1}^{\infty} \mathds{1}_{t_l \leq T} T_l \rho (\hat{\theta}_l)  \right] \\
&=T\mathbb{E} [\rho (\theta^{\ast})] - \sum_{l=1}^{\infty} \mathbb{E} \left[  \mathds{1}_{t_l \leq T} T_l \rho (\hat{\theta}_l) \right].
\end{split}
\end{equation}
Based on the first stopping criteria of the FedTSWI algorithm, it can be deduced that $T_l \leq T_{l-1}+1$ for all $l$.
Thus, we obtain
\begin{equation}\label{AdxD02}
  \mathbb{E} \left[  \mathds{1}_{t_l \leq T} T_l \rho (\hat{\theta}_l) \right] \leq \mathbb{E} \left[  \mathds{1}_{t_l \leq T} (T_{l+1}+1) \rho (\hat{\theta}_l) \right].
\end{equation}
Note that $\mathds{1}_{t_l \leq T} (T_{l+1}+1) $ is a measurable random variable.
Since $\mathbb{E}[f(\hat{\theta}_l, X)] = \mathbb{E}[f({\theta^{\ast}}, X)]$  (see Lemma 2 in \cite{ouyang2017learning}), we have
\begin{equation}\label{AdxD03}
\mathbb{E} \left[  \mathds{1}_{t_l \leq T} (T_{l+1}+1) \rho (\hat{\theta}_l) \right] = \mathbb{E} \left[  \mathds{1}_{t_l \leq T} (T_{l+1}+1) \rho (\theta^{\ast}) \right].
\end{equation}
Combining \eqref{AdxD01}, \eqref{AdxD02}, and \eqref{AdxD03}, it yields
\begin{equation}\label{AdxD04}
\begin{split}
\mathcal{R}eg_1 &\leq T\mathbb{E}[\rho (\theta^{\ast})] -  \mathbb{E} \left[  \mathds{1}_{t_l \leq T} (T_{l+1}+1) \rho (\theta^{\ast}) \right]  \\
&=T\mathbb{E} [\rho (\theta^{\ast})] -  \mathbb{E} \left[ \sum_{l=1}^{L_T} (T_{l-1} +1) \rho (\theta^{\ast}) \right]\\
&=\mathbb{E} [L_T \rho (\theta^{\ast})] + \mathbb{E} \left[ \left(T - \sum_{l=1}^{L_T}T_{l-1} \right) \rho (\theta^{\ast}) \right]\\
&\leq \mathbb{E} [L_T],
\end{split}
\end{equation}
where the last inequality holds since $\rho(\theta^{\ast}) \leq 1$ and $\sum_{l=1}^{L_T} T_{l-1} =  T_0 + \sum_{l=1}^{L_T-1} T_l \leq T$.

For $\mathcal{R}eg_2$, we have
\begin{equation}\label{AdxD05}
\begin{split}
  \mathcal{R}eg_2 &= \mathbb{E} \left[\sum_{l=1}^{L_T} \sum_{t=t_l}^{t_{l+1}-1} \left[V(s_{t+1}, \hat{\theta}_l) -  V(s_t, \hat{\theta}_l)  \right]  \right]\\
  &=\mathbb{E} \left[\sum_{l=1}^{L_T}  \left[V(s_{t_l+1}, \hat{\theta}_l) -  V(s_{t_l}, \hat{\theta}_l)  \right]  \right]\\
  &\leq D\mathbb{E} [L_T],
\end{split}
\end{equation}
where the last inequality is obtained using $0\leq V(s, \hat{\theta}_l)\leq SP(\hat{\theta}_l)\leq D$ for all $s$ and $\theta$.

For $\mathcal{R}eg_3$, since $0\leq V(s', \hat{\theta}_l) \leq H$, we have
\begin{equation}\label{AdxD06}\small
\begin{split}
  \mathcal{R}eg_3 & = \mathbb{E} \left[ \sum_{l=1}^{L_T} \sum_{t=t_l}^{t_{l+1}-1}  \left[\sum_{s' \in {S}} \hat{\theta}_l(s'|s,\pi(s))V(s', \hat{\theta}_l)- V(s_{t+1}, \hat{\theta}_l) \right]  \right]\\
     & =\mathbb{E} \left[ \sum_{l=1}^{L_T} \sum_{t=t_l}^{t_{l+1}-1}  \left[ \sum_{s'\in S} \left(\hat{\theta}_l (s'|s,a) - \theta^{\ast} (s'|s,a) \right)V(s', \hat{\theta}_l) \right]  \right] \\
     &\leq D \mathbb{E} \left[ \sum_{l=1}^{L_T} \sum_{t=t_l}^{t_{l+1}-1}  \left[ \left| \sum_{s'\in S} \left(\hat{\theta}_l (s'|s,a) - \theta^{\ast} (s'|s,a) \right) \right| \right]  \right].
\end{split}
\end{equation}
We still need to consider the term in the inner summation, which has the same process as in the Lemma 5 of paper \cite{ouyang2017learning}.
Thus, we have
\begin{equation}\label{AdxD07}
\mathcal{R}eg_3 \leq 49  D|\mathcal S| \sqrt{\frac{2NT\log (2T)}{KM}},
\end{equation}
where the factor $\sqrt{N/K}$ comes from the fact that an arm can be uniformly observed with probability $K/N$.

Finally, by summarizing over the arms and the terms $\mathcal{R}eg_1$, $\mathcal{R}eg_2$, and $\mathcal{R}eg_3$ for all arm $n$,
we can obtain the regret upper bound of the FedTSWI algorithm, i.e.,
\begin{equation}\label{AdxD08}
  \begin{split}
     \mathcal{R}eg (T) & = \sum_{n=1}^{N}   \mathcal{R}eg (n, T)
        =  \sum_{n=1}^{N} \left( \mathcal{R}eg_1 + \mathcal{R}eg_2 + \mathcal{R}eg_3 \right)\\
       &\leq  \sum_{n=1}^{N} \left( (D+1) \mathbb{E} [L_T] + 49  D|\mathcal S^n| \sqrt{\frac{2NT\log (2T)}{KM}} \right).
  \end{split}
\end{equation}
By using Lemma 1, we get
\begin{equation}\label{AdxD09}
\begin{split}
\mathcal{R}eg(T) \leq  &(D+1)\sum_{n=1}^{N} \sqrt{\frac{4NT|\mathcal S^n|\log(T+1)}{KM}} \\
&+ 49D \sqrt{\frac{2NT\log(2T)}{KM}}  \sum_{n=1}^{N}|\mathcal S^n| + \mathcal{O}(\epsilon T),
\end{split}
\end{equation}
which concludes this proof.
$\hfill\blacksquare$

\section{Proof of Proposition 2}\label{appendix2}
We rewrite the Bellman equation in Section 4.3 as follows:
\begin{equation}\label{AdxB01}\small
\left\{\begin{array}{ll}
 V_{\lambda}^{\pi}\left(b; a=0\right) = \sum\limits_{s \in \mathcal{S}} b(s) R^{\pi}(s, a) +\lambda +  V_{\lambda}^{\pi}\left(b'\right), \\
V_{\lambda}^{\pi}\left(b;a=1 \right) = \sum\limits_{s \in \mathcal{S}} b(s) R^{\pi}(s, a) +  \sum\limits_{s \in \mathcal{S}} b(s) V_{\lambda}^{\pi}\left(b'\right),  \\
\end{array}\right.
\end{equation}
where $V_{\lambda}^{\pi}\left(b;a=1 \right)$ is a linear function of the belief state $b$ when taking action $a=1$.
Meanwhile, according to the general result on the convexity of the value function in POMDP \cite{Sondik1978Near},
$V_{\lambda}^{\pi}\left(b;a=0 \right)$ is convex in $b$ when action $a=0$.
Therefore, $V_{\lambda}^{\pi} (b)$ exhibits a piecewise linear or convex structure.
Fig. \ref{BeliState} shows the convex structure of the value function $V_{\lambda}^{\pi} (b)$ for the two-state Markov model.
We see that the intersection $b^{\ast}(\lambda)$ of functions $V_{\lambda}^{\pi}\left(b;a=0 \right)$
and $V_{\lambda}^{\pi}\left(b;a=1 \right)$ divides the belief space into the active set (LHS) and the passive set (RHS).
\begin{figure}[!t]
\centering
\includegraphics[width=2.4in]{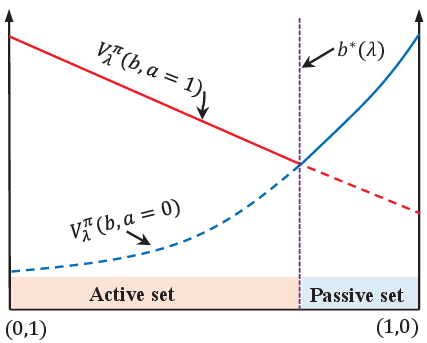}
\caption{The piecewise  convex structure of the maximum value function \eqref{AdxB01} in two-state Markov model.}
\label{BeliState}
\end{figure}

Next, we investigate the impact of the passive subsidy $\lambda$ since it can directly influence the value of $V_{\lambda}^{\pi}\left(b;a=0 \right)$.
Let $\Delta^{a=1}_{\min}$ and $\Delta^{a=1}_{\max}$ be the minimum and maximum  values of $V_{\lambda}^{\pi}\left(b;a=1 \right)$ for $a=1$ as in Fig. \ref{BeliState}, respectively,
and
\begin{equation}\label{AdxB02}
\left\{\begin{array}{ll}
\Delta^{a=0}_{\min} = \min\limits_b \ \sum\limits_{s \in \mathcal{S}} b(s) R^{\pi}(s, a)  +  V_{\lambda}^{\pi}\left(b'\right) , \\
\Delta^{a=0}_{\max}= \max\limits_b \ \sum\limits_{s \in \mathcal{S}} b(s) R^{\pi}(s, a)  +  V_{\lambda}^{\pi}\left(b'\right),  \\
\end{array}\right.
\end{equation}
for $a=0$.
Note that the above values can be achieved at the corners (e.g., $(0,1)$ and $(1,0)$ in Fig. \ref{BeliState}) of the belief simplex due to the convexity of the value function.
Then, we can deduce that: i) If $\lambda>\Delta^1_{\max}-\Delta^0_{\min}$, it is better to be passive for all belief state and $b^{\ast}(\lambda)<0$;
ii) If $\lambda \leq \Delta^1_{\min}-\Delta^0_{\max}$, it is better to be active for all belief state and $b^{\ast}(\lambda) \geq 1$;
iii) If $\Delta^1_{\min}-\Delta^0_{\max} < \lambda \leq \Delta^1_{\max}-\Delta^0_{\min}$, it is better to be active  when $b<b^{\ast}(\lambda)$ and to be passive when $b\geq b^{\ast}(\lambda)$.
Therefore, the optimal policy for the single-agent RMAB problem with subsidy $\lambda$  is a threshold policy.
$\hfill\blacksquare$

\section{Proof of Corollary 1}\label{appendix5}
From value function $V^{\pi}_{\lambda}(b,a=0)$, we see that it is a convex function of subsidy $\lambda$.
According to Proposition \ref{appendix2},  the threshold $b^{\ast}(\lambda)$ is a linear function of subsidy $\lambda$.
In other words, $b^{\ast}(\lambda)$ increases when $\lambda$ changes from $-\infty$ to $+\infty$.
As a result, there exists an $\lambda^{\ast}$ such that the passive set $\mathcal{P}(\lambda)=\emptyset$.
That is, the passive set $\mathcal{P}(\lambda)$ monotonically increases from $\emptyset$ to the whole state space as $\lambda$ increases from $-\infty$ to $+\infty$.
According to \emph{Definition} 3, the single-agent RMAB problem is indexable since every single-armed bandit process is indexable.
$\hfill\blacksquare$

\com{
\section{Proof of Proposition 3}\label{appendix6}
First, we show how to obtain the expressions (32).
Let $\mathds{T}^j (b(t)) \triangleq \mathrm{Pr} [s(t+j) =1 | b(t)] $ be the $j$-step belief update when the arm is unobserved ($a_t =0$) for $j$ consecutive rounds. According to the definition of belief state in (27), i.e., $b^n(s')= b^n(s)\theta^n_{11} +(1-b^n(s))\theta^n_{01}, \    \mathrm{if}  \ a^n_t =0$, we have
\begin{equation}\label{CS02}
\mathds{T}^j (b(t)) = b(t)\mathds{T}^j (1) +  (1-b(t))\mathds{T}^j (0),
\end{equation}
where $\mathds{T}^j (1)$ is the $j$-step transition probability from $1$ to $1$, and $\mathds{T}^j (0)$ is the $j$-step transition probability from $0$ to $1$. Based on the eigendecomposition of the transition matrix $[\theta_{00}, \theta_{01}; \theta_{10}, \theta_{11}]$, we have
\begin{equation}\label{decp1}
\mathds{T}^j (1) = \frac{\theta_{01} + (1-\theta_{11})(\theta_{11}-\theta_{10})^j}{1+\theta_{01} - \theta_{11}},
\end{equation}
and
\begin{equation}\label{decp2}
\mathds{T}^j (0) = \frac{\theta_{01}(1-(\theta_{11}-\theta_{01})^j)}{1+\theta_{01} - \theta_{11}}.
\end{equation}
By substituting \eqref{decp1} and \eqref{decp2}  into  \eqref{CS02}, we obtain
\begin{equation}\label{CS00A}
\mathds{T}^j (b(t)) =   \frac{\theta_{01} - (\theta_{11}-\theta_{01})^j (\theta_{01} - (1+\theta_{01} -\theta_{11}b(t)))}{1+\theta_{01}-\theta_{11}}.
\end{equation}

Then, we show how to obtain expression (33) based on the property of \eqref{CS00A}.
By analyzing \eqref{CS00A}, we  see that Eq. \eqref{CS00A} is monotonically converge to $b_0 = \theta_{01}/(\theta_{01}+1 - \theta_{11})$ as $j\rightarrow \infty$.
Therefore, we have $b_0 \leq \mathds{T}^j (b(t))\leq \mathds{T}^j (1)$. We first analyze $\mathds{T}^j (1)$, which can be expanded by
\begin{equation}\label{bb}
\mathds{T}^j (1) = \theta_{01} + b(t)(\theta_{11} - \theta_{01}).
\end{equation}
Thus, $\mathds{T}^j (1) \leq \max \{ \theta_{01}, \theta_{11}\}, \ \forall j\geq 1$. In addition, $b_0 = \theta_{01}/(\theta_{01}+1 - \theta_{11}) \geq \min \{ \theta_{01}, \theta_{11}\}$.
To conclude, we have
\begin{equation}\label{CS00B}
\min \{ \theta_{01}, \theta_{11}\} \leq \mathds{T}^j (b) \leq \max \{ \theta_{01}, \theta_{11}\}, \ \forall j\geq 1.
\end{equation}}

\balance
\bibliographystyle{./IEEEtran}
\bibliography{./IEEEabrv,./FRMAB_tswi}
\end{document}